\documentclass{article}

\usepackage{microtype}
\usepackage{graphicx}
\usepackage{subcaption}
\usepackage{booktabs} 

\usepackage[accepted]{icml2026}

\usepackage{amsmath}
\usepackage{amssymb}
\usepackage{mathtools}
\usepackage{amsthm}
\usepackage{bm}
\usepackage{multirow}
\usepackage{wrapfig}
\usepackage{graphicx} 
\usepackage{colortbl} 

\usepackage[most]{tcolorbox}
\usepackage{enumitem}

\definecolor{boxheader}{RGB}{60, 60, 60}   
\definecolor{codebg}{RGB}{245, 245, 245}   

\newtcolorbox{promptbox}[1]{
    enhanced,
    colback=white,              
    colframe=black,             
    colbacktitle=boxheader,     
    coltitle=white,             
    fonttitle=\bfseries,  
    title={#1},                 
    arc=5pt,                    
    boxrule=1.2pt,              
    left=12pt, right=12pt, top=12pt, bottom=12pt, 
    parbox=false                
}

\newtcolorbox{jsonbox}{
    colback=codebg,
    colframe=codebg, 
    left=5pt, right=5pt, top=5pt, bottom=5pt,
    arc=3pt,
    fontupper=\ttfamily\small, 
    boxsep=0pt
}

\definecolor{headergray}{HTML}{E7E7E7}
\definecolor{methodblue}{HTML}{EDF4FF}
\definecolor{myblue}{HTML}{001473}

\definecolor{methodgray}{HTML}{E7E7E7}
\definecolor{methodyellow}{HTML}{FAEBC7}

\usepackage{caption}

\usepackage[
  colorlinks=true,
  linkcolor=myblue,
  citecolor=myblue,
  urlcolor=myblue
]{hyperref}

\definecolor{lightgray}{gray}{0.95}

\usepackage[capitalize,noabbrev]{cleveref}

\theoremstyle{plain}
\newtheorem{theorem}{Theorem}[section]

\newtheorem{lemma}[theorem]{Lemma}

\theoremstyle{definition}

\newtheorem{assumption}[theorem]{Assumption}
\theoremstyle{remark}
\newtheorem{remark}[theorem]{Remark}

\usepackage[disable,textsize=tiny]{todonotes}

\icmltitlerunning{Order within Chaos}

\begin{document}

\twocolumn[
  \icmltitle{Order within Chaos: Capturing Intrinsic Energy Anomalies for AI-Manipulated Image Forgery Localization}



  \icmlsetsymbol{equal}{*}

  \begin{icmlauthorlist}
    \icmlauthor{Yiming Wang}{zju}
    \icmlauthor{Baiqi Wu}{zju}
    \icmlauthor{Qingming Li}{equal,zju}
    \icmlauthor{Jiahao Chen}{zju}
    \icmlauthor{Tong Zhang}{zju}
    \icmlauthor{Shouling Ji}{zju}
  \end{icmlauthorlist}

  \icmlaffiliation{zju}{Zhejiang University, Hangzhou, China}

  \icmlcorrespondingauthor{Qingming Li}{liqm@zju.edu.cn}

  \icmlkeywords{Image Forgery Localization, Image Editing, Diffusion Models}

  \vskip 0.3in
]



\printAffiliationsAndNotice{}  

\begin{abstract}
Recent advancements in generative AI have led to image editing models capable of producing realistic forgeries that evade traditional image forgery localization methods, as these approaches depend on physical noise absent in synthetic data. To address this challenge, we theoretically demonstrate that the diffusion process inherently suppresses local high-frequency variance, creating a statistical energy gap that is distinguishable from the natural entropy of optical imaging. Guided by this insight, we propose FLAME, a unified framework that utilizes a LAD map to capture these intrinsic anomalies, coupled with a parameter-efficient adapter for SAM to achieve precise, pixel-level forgery localization. Furthermore, to bridge the lag between forensic benchmarks and evolving generative models, we introduce EditStream, an automated pipeline for continuous, instruction-based training data synthesis. Extensive experiments demonstrate that FLAME establishes a new state-of-the-art, significantly outperforming previous methods on AI-generated forgery datasets while effectively generalizing to unseen generative architectures. Our code is available at \url{https://github.com/phoenixnir/FLAME}.
\end{abstract}
\section{Introduction}
Recent advancements in diffusion models have rendered synthetic images highly photorealistic and scalable~\cite{rombach2022high, kawar2023imagic}. Beyond text-to-image systems that generate high-fidelity content, instruction-based editing further enables the manipulation of real images, thereby streamlining creative workflows and facilitating artistic expression \cite{huang2025diffusion}. However, the widespread availability of such powerful capabilities simultaneously amplifies the risks of visual deception, ranging from the proliferation of misinformation to identity fraud~\cite{nytimes2024taylor, bbc2024inside}. This emerging threat necessitates the evolution of \textbf{AI-Manipulated Image Forgery Localization (AI-IFL)}, which advances beyond image-level classification to pixel-level localization that explicitly delineates tampered regions for robust forensic verification \cite{kadha2025unravelling, zou2025survey}.

The primary objective of IFL is to localize manipulated regions within visual content. In the pre-AIGC era, IFL methodologies relied on verifying the consistency of physical signals. These approaches operated on the premise that conventional manipulations~\cite{zanardelli2023image} such as splicing, disrupt the continuity of the physical imaging pipeline, leaving discernible traces like sensor noise residuals and compression artifacts. However, the emergence of advanced diffusion models (e.g., Stable Diffusion~\cite{podell2023sdxl}, FLUX~\cite{batifol2025flux}, and DALL·E 3~\cite{betker2023improving}) has precipitated a fundamental paradigm shift in the forensic landscape. The dominant forensic cues have transitioned from physical anomalies to \emph{synthetic traces} introduced by the generative process~\cite{wang2023dire, cao2025temporal}. Consequently, traditional methods often fail in this new context~\cite{huang2025sida}.

To address the lack of physical signals, recent research~\cite{xu2024fakeshield, huang2025sida} has pivoted towards semantic-level scrutiny by leveraging MLLMs. These approaches treat forgery localization as a visual reasoning task, identifying inconsistencies such as geometric contradictions. However, this direction faces a semantic illusion trap. Up-to-date diffusion models possess such strong physical priors that they generate semantically impeccable edits, leaving little semantic contradiction to exploit. Furthermore, the tokenization process inherent to MLLMs limits their visual perception to a coarse granularity \cite{naseer2021intriguing, song2024learning}, rendering them blind to the subtle pixel-level imperfections that distinguish neural rendering from reality.

We argue that for the AI-IFL task, the most reliable forensic evidence lies neither in missing physical sensors nor in semantic errors, but in the \textbf{intrinsic statistical anomalies} induced by the generative mechanism itself. From a \textbf{statistical mechanics perspective}, diffusion models optimize an energy-based variational bound~\cite{ho2020denoising}, which imposes a spectral bias: the denoising process tends to suppress high-frequency local variance to minimize energy \cite{rahaman2019spectral}. This creates a fundamental distinguishability: while authentic regions retain the natural chaos of optical imaging with high entropy, generated regions collapse into a state of artificial order with low entropy. Motivated by this observation, we formulate AI-IFL through Gibbs energy modeling~\cite{derin1987modeling}, which provides a principled way to characterize \textbf{pixel-level local dependency} structures. Here, pixel-level local dependency measures how each pixel statistically correlates with its neighborhood. Such local interactions can be expressed as spatial energy variations, allowing us to convert subtle distributional differences into spatially coherent indicators that facilitate precise localization.

\begin{figure}[t] 
    \centering
    \includegraphics[width=0.9\linewidth]{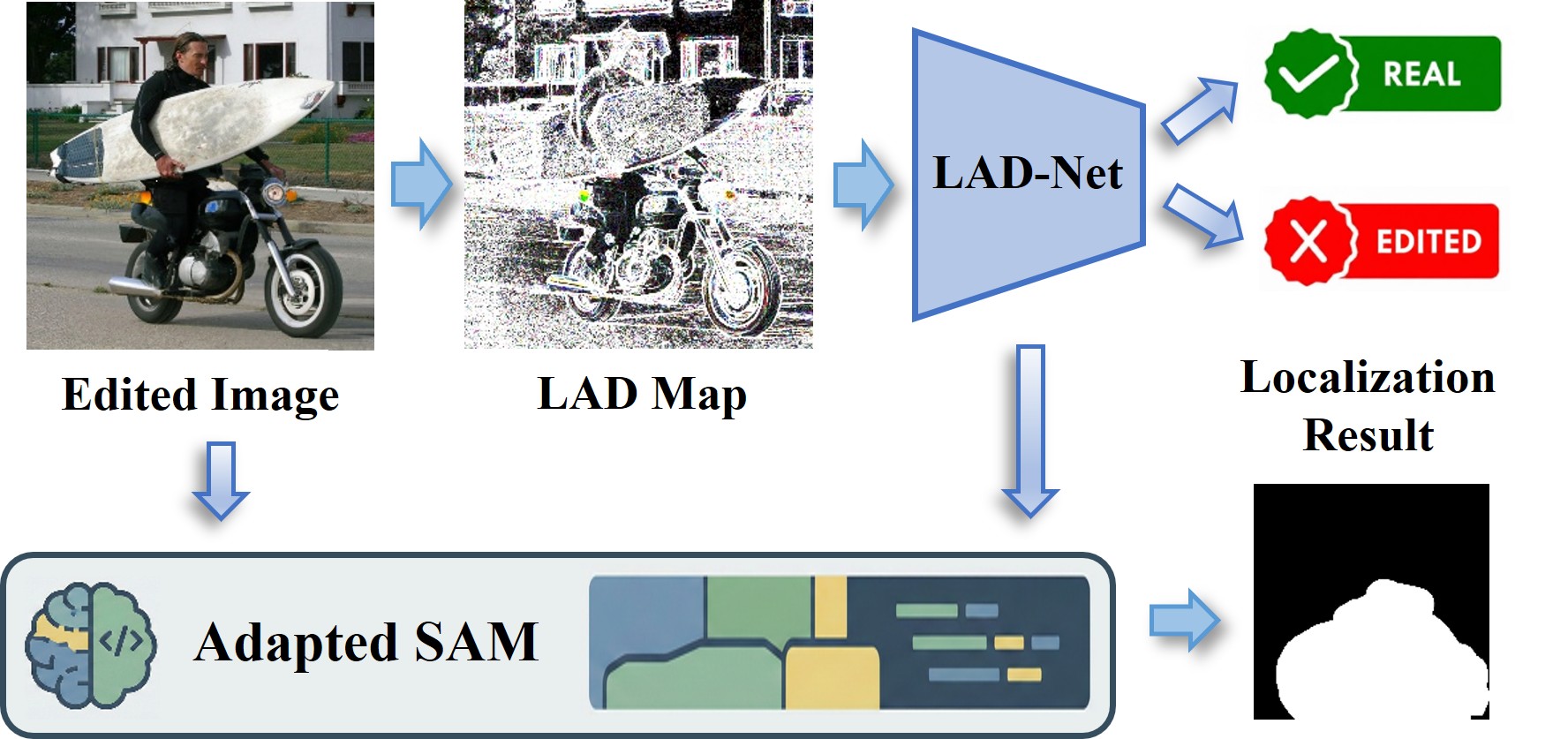}
    \caption{The overall workflow of FLAME. }
    \label{fig: intro}
\end{figure}

Based on this local dependency viewpoint, our theoretical analysis identifies two diffusion-induced forensic signatures. First, generated regions often exhibit artificially regular local statistics, forming low-energy basins compared to authentic content. Second, latent-space editing may disrupt local correlations near tampering boundaries, leading to a localized energy surge that highlights forgery. To quantify these signatures, we introduce the Local Adjacency Discrepancy (LAD) map, which visualizes such statistical divergences and serves as a robust, content-agnostic forensic fingerprint.

Building upon the LAD map, we propose \textbf{Fine-grained Localization via Adjacency Map Energy (FLAME)}, a unified framework that implements a coarse-to-fine localization strategy. The architecture comprises two synergistic modules. First, a lightweight LAD-Net extracts artifact features from the LAD map to perform global image-level forgery detection and generate a preliminary coarse localization mask. Second, to achieve pixel-level precision, we incorporate a semantic refinement module based on SAM \cite{carion2025sam}. By employing parameter-efficient adapters, this module aligns the low-level forensic cues with SAM's powerful segmentation priors, utilizing the coarse prediction as a spatial prompt to accurately delineate complex forgery boundaries. The overall workflow is shown in Figure \ref{fig: intro}.

Finally, to address the temporal obsolescence of static benchmarks, we introduce \textbf{EditStream}, a self-evolving data synthesis pipeline. By autonomously scouting and integrating emerging inpainting models, EditStream continuously updates the training distribution with novel artifact signatures. This mechanism fosters a virtuous cycle, compelling the model to learn architecture-invariant representations and ensuring robust generalization against unseen threats.



In summary, our contributions are as follows:

1. We theoretically prove that the inherent spectral bias of diffusion models inevitably induces distinct local energy anomalies, establishing a rigorous mathematical foundation for forgery localization.

2. We propose FLAME, a unified framework for AI-IFL that leverages these intrinsic artifacts. By utilizing the LAD map for feature extraction and a lightweight adapter-based SAM for refinement, we achieve precise forgery localization.

3. We develop EditStream, an automated pipeline for instruction-based image editing, capable of integrating the latest open-source models to generate diverse and challenging evaluation samples.

4. Leveraging EditStream, we construct a new dataset based on cutting-edge editing models. Comprehensive experiments validate that FLAME establishes a new state-of-the-art in accuracy and robustness across multiple benchmarks.


\section{Related Work}

\textbf{Image Forgery Localization.} IFL addresses the challenge of verifying image integrity by locating tampered regions. Traditional methods target conventional manipulations such as splicing and copy-move, relying on low-level physical forensic cues \cite{liu2022pscc, guillaro2023trufor, bazyleva2025x}. These approaches can be broadly divided into two categories. The first category focuses on detecting explicit artifacts derived from reconstruction errors \cite{bazyleva2025x} or JPEG compression traces. The second targets implicit noise artifacts \cite{guillaro2023trufor}, where trained networks like Noiseprint \cite{cozzolino2019noiseprint} extract specific sensor residuals. While effective for physical manipulations, these methods suffer from significant performance degradation on AI-generated content \cite{huang2025sida}, as the generative process of diffusion models replaces camera sensor noise with synthetic signatures, rendering traditional physical cues obsolete.

To adapt to this paradigm shift, current research has harnessed large foundation models. Some work \cite{li2024adaifl, zhu2025mesoscopic} utilizes the embedding space of vision encoders such as DINOv2 \cite{oquab2023dinov2} to detect subtle synthetic distribution shifts. Concurrently, a new branch employing MLLMs, such as SIDA \cite{huang2025sida} and FakeShield \cite{xu2024fakeshield}, exploits the text-to-image alignment nature of diffusion models to localize forgeries while providing linguistic explanations \cite{huang2025unishield}. Despite these advancements, current methods often rely heavily on the generalizability of pre-trained encoders, potentially overlooking the intrinsic artifacts specific to the diffusion generation process \cite{cai2025zooming}.

\textbf{Localized Image Generation. } To benchmark IFL performance, the community has established several standard datasets \cite{dong2013casia}. Early contributions relied on manually curated masks and prompts, limiting both scale and complexity \cite{zhang2023magicbrush, jia2023autosplice}. Recent advancements introduce automated pipelines driven by MLLMs, which successfully enable large-scale data synthesis \cite{sun2024rethinking, xu2024fakeshield, huang2025sida}. Despite this progress, existing generation protocols remain constrained by two limitations: a temporal lag behind the rapidly iterating editing models and a scarcity of editing diversity. These deficiencies compromise the alignment between benchmarks and real-world threats.
\section{Theoretical Analysis}
\label{sec:theory}

\subsection{Preliminaries}
\label{sec:preliminaries}

Let $\mathbf{x} \in \mathbb{R}^{H \times W \times 3}$ be an RGB image. The Latent Diffusion Model (LDM) consists of a pre-trained variational autoencoder \cite{kingma2013auto} with an encoder $\mathcal{E}$, decoder $\mathcal{D}$, and time-conditional denoising network $\epsilon_\theta$ \cite{ho2020denoising}.

The image is first encoded into a reference latent representation $\mathbf{z}_0^{\text{ref}} = \mathcal{E}(\mathbf{x}) \in \mathbb{R}^{h \times w \times c}$. The generative process is modeled as a reverse Markov chain that progressively removes noise from a Gaussian prior $\mathbf{z}_T \sim \mathcal{N}(\mathbf{0}, \mathbf{I})$ to recover the latent signal. The transition kernel $p_\theta(\mathbf{z}_{t-1}|\mathbf{z}_t)$ is approximated by the denoiser $\epsilon_\theta(\mathbf{z}_t, t)$, which minimizes the reweighted evidence lower bound:
\begin{equation}
    \mathcal{L}_{\text{LDM}} = \mathbb{E}_{\mathbf{z}_0, \mathbf{\epsilon}, t} \left[ \| \mathbf{\epsilon} - \epsilon_\theta(\mathbf{z}_t, t) \|_2^2 \right],
    \label{eq:ldm_loss}
\end{equation}
where $\mathbf{z}_t = \sqrt{\bar{\alpha}_t}\mathbf{z}_0 + \sqrt{1-\bar{\alpha}_t}\mathbf{\epsilon}$ represents the forward diffusion state at timestep $t$ derived from sample $\mathbf{z}_0$.

Localized editing aims to regenerate a specific region defined by a binary mask $\mathbf{m}$ (where $\mathbf{m}=1$ indicates the edited region) while preserving the authentic context. This is achieved via masked reverse sampling. At each reverse step $t \to t-1$, the latent state is composited as follows:
\begin{equation}
    \mathbf{z}_{t-1} = \underbrace{\mathbf{m} \odot \hat{\mathbf{z}}_{t-1}^{\text{gen}}}_{\text{Generated}} + \underbrace{(1 - \mathbf{m}) \odot \mathbf{z}_{t-1}^{\text{ref}}}_{\text{Authentic}},
    \label{eq:latent_composition}
\end{equation}
where $\hat{\mathbf{z}}_{t-1}^{\text{gen}}$ is predicted by the denoiser $\epsilon_\theta$ given the current state, and $\mathbf{z}_{t-1}^{\text{ref}} \sim q(\mathbf{z}_{t-1}|\mathbf{z}_0^{\text{ref}})$ is a sample from the forward process of the original authentic image. Finally, the edited image is reconstructed via the decoder: $\hat{\mathbf{x}} = \mathcal{D}(\hat{\mathbf{z}}_0)$, where $\hat{\mathbf{z}}_0$ denotes the final state obtained from the reverse process.

\begin{figure}[t] 
    \centering
    \includegraphics[width=0.95\linewidth]{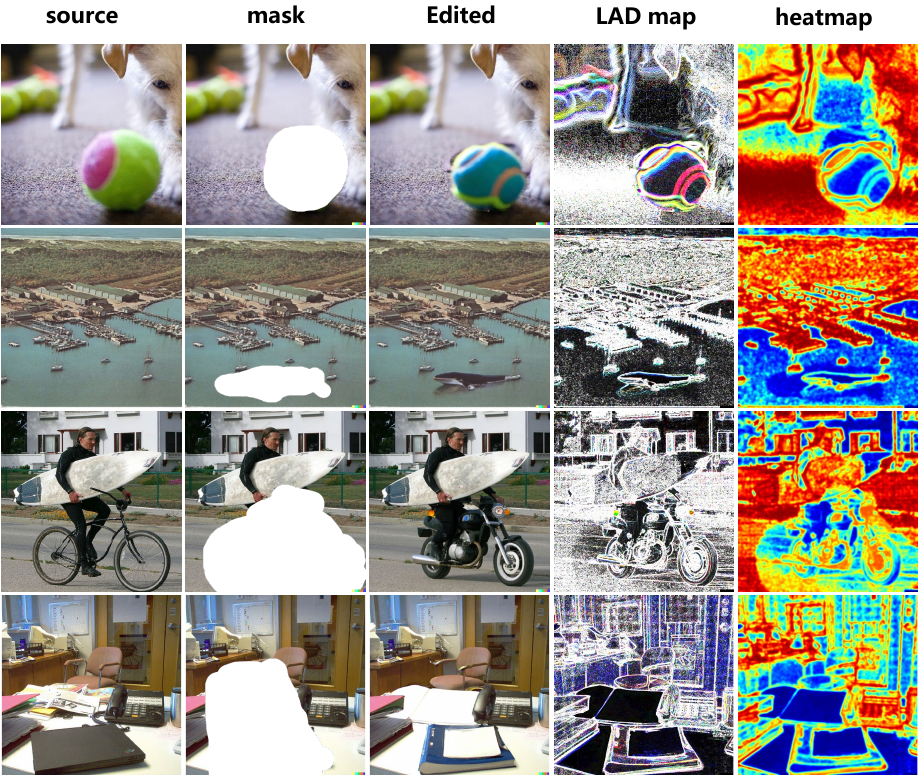}
    \caption{Visualization of the LAD map. From left to right, we display the source, ground-truth mask, and edited images, followed by the LAD map and its energy heatmap. }
    \label{fig: lad}
\end{figure}

\begin{figure*}[ht] 
    \centering
    \includegraphics[width=1.0\linewidth]{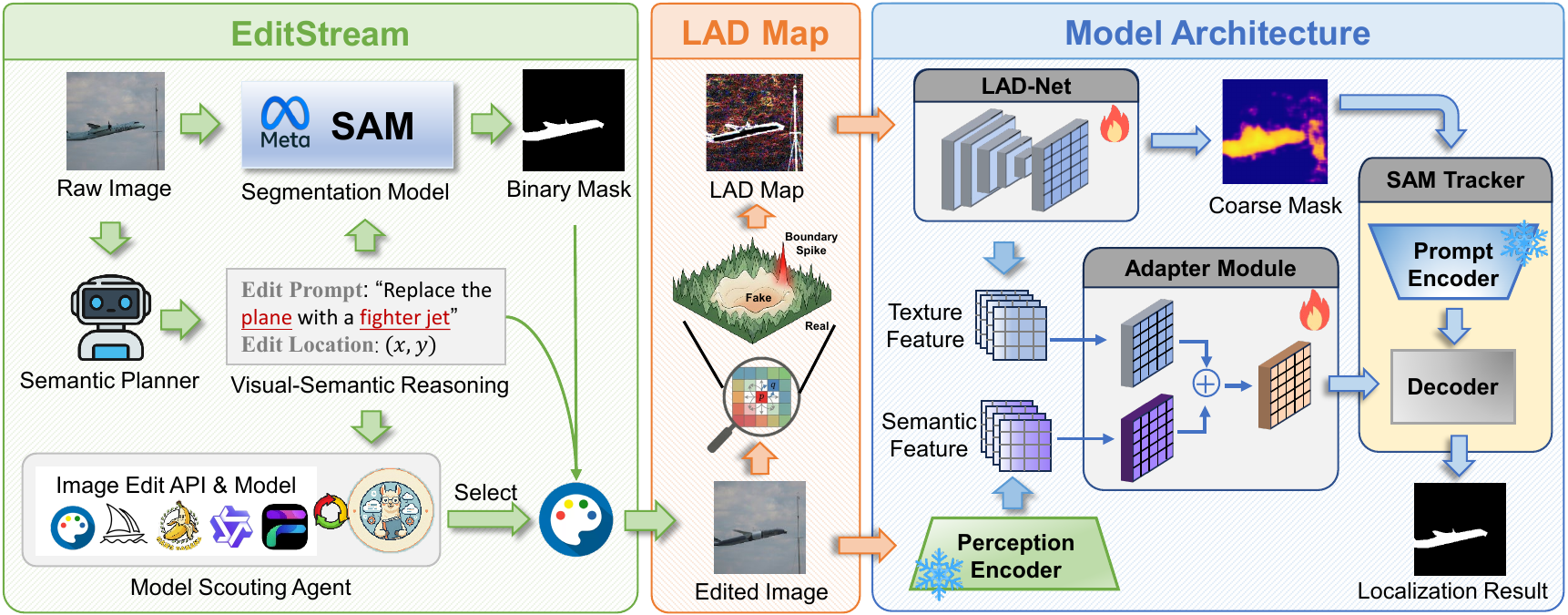}
    \caption{The overall flow of FLAME. \textbf{(Left)} EditStream functions as a self-evolving data engine, employing a Model Scouting Agent and Semantic Planner to generate diverse samples. \textbf{(Middle)} The LAD-Map Extraction module transforms the input image into an LAD map by capturing local dependency and energy potentials. \textbf{(Right)} The Model Architecture processes these inputs via LAD-Net to provide coarse guidance, which is then refined by a parameter-efficient Adapter Module within SAM to achieve pixel-level forgery localization.}
    \label{fig: framework}
\end{figure*}

\subsection{Artifact Separability via Gibbs Energy Modeling}
\label{sec:separability}

To quantify the statistical divergence between authentic and synthetic patterns, we model the image lattice as a Gibbs Random Field (GRF) \cite{derin1987modeling}. The probability distribution of an image $x$ is governed by $p(x) = \frac{1}{Z} \exp(-E(x))$, where $Z$ is the partition function and $E(x)$ represents the total energy state. We focus on the \textit{local pairwise potential} $V(p,q)$ between adjacent pixels $p$ and $q$, defined as a function of their intensity magnitude difference:
\begin{equation}
    V(p,q) = \rho\left(\|I(p) - I(q)\|_2\right).
\end{equation}
where $I(p)$ and $I(q)$ denote the RGB intensity vectors at pixels $p$ and $q$. This pairwise potential serves as a spatial domain proxy for spectral energy. A high potential value implies a large local gradient, which corresponds physically to the presence of high-frequency signal components. Conversely, a low potential indicates local smoothness, dominated by low-frequency structures. Based on this formulation, we establish the following assumption regarding the separability of authentic and generated distributions.

\begin{assumption}[Spectral-Energy Inequality]
\label{asm:energy_gap}
Let $\mathcal{U}$ denote the authentic region and $\mathcal{M}$ denote the diffusion-generated region. We postulate that the expected local Gibbs energy in authentic regions strictly exceeds that of generated regions due to the disparity in high-frequency spectral density:
\begin{equation}
    \mathbb{E}_{(p,q) \in \mathcal{U}}[V(p,q)] \ge \xi_{\text{noise}} > \mathbb{E}_{(p,q) \in \mathcal{M}}[V(p,q)].
\end{equation}
Here, $\xi_{\text{noise}}$ represents the irreducible energy floor induced by physical sensor noise.
\end{assumption}

\begin{remark}
\label{rem:spectral_bias}
The theoretical grounding for Assumption~\ref{asm:energy_gap} stems from the fundamental difference between optical imaging physics and neural generative mechanisms:

\textbf{1. Physical Entropy vs. Neural Order:} Authentic images ($\mathcal{U}$) are captured via optical sensors where photon shot noise and thermal noise introduce stochastic, pixel-independent perturbations \cite{foi2008practical}. These perturbations manifest as high-frequency irregularities, resulting in a high-entropy state with substantial local energy potentials.

\textbf{2. The Diffusion Spectral Bias:} In contrast, diffusion models generally optimize a reconstruction objective (e.g., MSE) constrained by a neural network architecture. As noted in spectral bias literature~\citep{rahaman2019spectral, wang2025analytical}, deep networks prioritize learning low-frequency semantic structures while struggling to capture high-frequency noise. Furthermore, the Lipschitz continuity of the VAE decoder acts as a low-pass filter, truncating the high-frequency tail of the spectrum. Consequently, generated regions ($\mathcal{M}$) collapse into a state of artificial order, statistically over-smoothed textures that lack the requisite high-frequency energy to match natural sensor noise.
\end{remark}

\begin{theorem}[Energy Gap \& Boundary Spike]
    \label{thm:energy_gap}
    Let $\mathcal{E}_{loc}(p) = \mathbb{E}_{q \in \mathcal{N}_p} [\|I(p) - I(q)\|^2]$ be the expected local energy. Under Assumption \ref{asm:energy_gap} and latent mixing mechanism (Eq. \ref{eq:latent_composition}):
    
        \textbf{1. Interior Gap:} The energy in the authentic region is greater than in the generated interior: $\mathcal{E}_{loc}(\mathcal{U}) > \mathcal{E}_{loc}(\mathcal{M})$.
        
        \textbf{2. Boundary Spike:} At the mask boundary $\partial \mathcal{M}$, the hard latent composite disrupts the local covariance structure, creating a manifold mismatch. This results in a localized surge in energy: $\mathcal{E}_{loc}(\partial \mathcal{M}) \gg \mathcal{E}_{loc}(\mathcal{M})$.
\end{theorem}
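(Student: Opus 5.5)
For the Interior Gap, I would reduce the statement directly to Assumption~\ref{asm:energy_gap} by choosing the potential $\rho(s)=s^2$. Then $V(p,q)=\|I(p)-I(q)\|_2^2$, and for any pixel $p$ whose neighbourhood $\mathcal{N}_p$ lies entirely inside one region we get $\mathcal{E}_{loc}(p)=\mathbb{E}_{q\in\mathcal{N}_p}[V(p,q)]$. Define $\mathcal{E}_{loc}(\mathcal{R})$ as the average of $\mathcal{E}_{loc}(p)$ over $p\in\mathcal{R}$. By linearity of expectation this equals the expected pairwise potential over adjacent pairs in $\mathcal{R}$, with boundary pairs excluded because we restrict to interior pixels. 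The chain $\mathbb{E}_{\mathcal{U}}[V]\ge\xi_{\text{noise}}>\mathbb{E}_{\mathcal{M}}[V]$ then gives the strict inequality immediately. If the assumption is read for a general monotone $\rho$, I would first specialise it to the quadratic case, which is the one tied to spectral energy. As support, I would note Parseval: the mean squared finite difference equals $\sum_\omega |2\sin(\omega/2)|^2|\hat I(\omega)|^2$, a high-pass-weighted spectral energy. This is consistent with Remark~\ref{rem:spectral_bias}.

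For the Boundary Spike, take $p\in\partial\mathcal{M}$ with neighbours $q$ in both regions. I would split $\mathcal{E}_{loc}(p)$ into the neighbours in $\mathcal{M}$ and those in $\mathcal{U}$. For a cross-boundary pair, write $I(p)-I(q)=(\mu_{\mathcal{M}}(p)-\mu_{\mathcal{U}}(q))+(n_p-n_q)$. Here the $\mu$ are the region-wise conditional means produced by the decoder on the two latent branches of Eq.~\ref{eq:latent_composition}, and the $n$ are residual fluctuations. Expanding the square gives
\[
\mathbb{E}\|I(p)-I(q)\|^2=\|\Delta\mu\|^2+\sigma_{\mathcal{M}}^2+\sigma_{\mathcal{U}}^2-2\,\mathrm{Cov}(n_p,n_q).
\]
The key claim is that the hard composite makes $\mathbf{z}^{\text{gen}}$ and $\mathbf{z}^{\text{ref}}$ conditionally independent across the mask edge, because they come from different sampling trajectories. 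So the cross-covariance term, which keeps within-region differences small, vanishes. Comparing with an interior pair of $\mathcal{M}$, whose expectation is $2\sigma_{\mathcal{M}}^2(1-r_{\mathcal{M}})$ with high correlation $r_{\mathcal{M}}$, the boundary excess is at least $\|\Delta\mu\|^2+\sigma_{\mathcal{U}}^2-\sigma_{\mathcal{M}}^2+2r_{\mathcal{M}}\sigma_{\mathcal{M}}^2$. This is positive using the Interior Gap, and large when the generated interior energy is small. That yields $\mathcal{E}_{loc}(\partial\mathcal{M})\gg\mathcal{E}_{loc}(\mathcal{M})$, interpreted as a ratio bounded below by $(\|\Delta\mu\|^2+\xi_{\text{noise}})/\mathcal{E}_{loc}(\mathcal{M})$.

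The main obstacle is making the decorrelation step rigorous. The decoder $\mathcal{D}$ has a receptive field spanning the latent seam, so pixels near $\partial\mathcal{M}$ mix both branches, and correlation is only reduced, not eliminated. I would handle this with an explicit modelling hypothesis: assume the cross-seam correlation is at most some $r_\partial<r_{\mathcal{M}}$, and carry $r_\partial$ through the bound. I would also treat ``$\gg$'' quantitatively, as a lower bound on the ratio rather than a strict asymptotic statement.
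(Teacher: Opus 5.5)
Your argument is sound at the level of rigor the statement allows, but it reaches the two claims by a partly different route than the paper.

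\textbf{Interior Gap.} You specialise Assumption~\ref{asm:energy_gap} to $\rho(s)=s^2$, so the claim becomes a restatement of the hypothesis and the latent composite plays no role. That specialisation is legitimate only if the assumption is read with the quadratic potential (the paper's canonical choice) or for every monotone $\rho$. An inequality for one monotone $\rho$ does not transfer to another without a stochastic-dominance hypothesis. The paper's appendix proof instead replaces Assumption~\ref{asm:energy_gap} by a three-stage model:
\begin{enumerate}
\item pixel-independent additive sensor noise;
\item a VAE round trip $P=D_\psi\circ E_\phi$ acting as a local smoother;
\item Assumption~\ref{assump:diffusion-deficit}, which posits that generated neighbour differences fall below those of $P(I)$ by $(1-\kappa_{\mathrm{diff}})\xi_{\text{noise}}$.
\end{enumerate}
It then uses the hard replacement to argue that the preserved region of $\hat{x}$ is anchored to $P(I)$. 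This explains where the gap comes from. It also handles the fact that the ``authentic'' pixels of an edited image have themselves passed through the decoder, a fact your reading silently packs into Assumption~\ref{asm:energy_gap}. Your route is shorter and uses only the main-text hypotheses.

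\textbf{Boundary Spike.} Your core step coincides with Lemma~\ref{lem:boundary-cross}: the same mean-plus-covariance expansion of $\mathbb{E}\|X-Y\|^2$, with a weakened cross-seam covariance. The paper, however, compares the cross pair with a within-$\mathcal{U}$ pair. It therefore beats $\mathcal{E}_{loc}(\mathcal{M})$ only through Part~1, and only by an additive margin. You compare directly with an $\mathcal{M}$-interior pair, which gives a ratio reading of ``$\gg$''. You also make the decorrelation an explicit hypothesis, where the paper hides it in an unquantified ``non-degenerate mismatch'' proviso.

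Three points need fixing.

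First, your ratio bound is overstated. A boundary pixel may have only $k$ of its $|\mathcal{N}_p|$ neighbours across the seam (as few as one, e.g.\ at a corner of the mask). The cross-pair excess is therefore diluted by $k/|\mathcal{N}_p|$; the paper keeps this factor as $\Delta_\partial/|\mathcal{N}_p|$. Moreover, in your model (with $r_{\mathcal{U}}\ge 0$ and within-region mean gradients neglected, as in your interior computation), Assumption~\ref{asm:energy_gap} gives $2\sigma_{\mathcal{U}}^2(1-r_{\mathcal{U}})\ge\xi_{\text{noise}}$, hence only $\sigma_{\mathcal{U}}^2\ge\xi_{\text{noise}}/2$. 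The bound should read roughly $\frac{k}{|\mathcal{N}_p|}(\|\Delta\mu\|^2+\xi_{\text{noise}}/2)/\mathcal{E}_{loc}(\mathcal{M})$.

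Second, the independence of the two branches fails for a more basic reason than the decoder's receptive field. $\hat{\mathbf{z}}^{\text{gen}}_{t-1}$ is computed from the composite $\mathbf{z}_t$, which contains $\mathbf{z}^{\text{ref}}_t$. The generated branch is thus explicitly conditioned on the reference side at every step, so your $r_\partial$ hypothesis is indispensable rather than a technical patch.

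Third, your positivity step is fine with $r_\partial=0$, but once $r_\partial>0$ it no longer follows from the Interior Gap alone. The excess is then bounded below only by
\[
\|\Delta\mu\|^2+\sigma_{\mathcal{U}}^2-\sigma_{\mathcal{M}}^2+2r_{\mathcal{M}}\sigma_{\mathcal{M}}^2-2r_\partial\sigma_{\mathcal{U}}\sigma_{\mathcal{M}}.
\]
This is positive when $\sigma_{\mathcal{U}}\ge\sigma_{\mathcal{M}}$ and $r_\partial<r_{\mathcal{M}}$, but it can be negative otherwise. For example, take $\Delta\mu=0$, $\sigma_{\mathcal{M}}=1$, $r_{\mathcal{M}}=0.7$, $r_{\mathcal{U}}=0$ and $\sigma_{\mathcal{U}}=r_\partial=0.65$. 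This satisfies the Interior Gap ($0.845>0.6$), yet the excess is $-0.0225$. You therefore need either $\sigma_{\mathcal{U}}\ge\sigma_{\mathcal{M}}$ or a mean jump bounded away from zero. That is exactly the role of the paper's non-degeneracy proviso.
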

Please refer to Appendix~\ref{thm:energy-gap-boundary-spike} for full proof.

\subsection{Local Adjacency Discrepancy Map}
\label{sec:lad_map}

Building upon the theoretical \textit{Energy Gap} and \textit{Boundary Spike} in Theorem \ref{thm:energy_gap}, we propose the LAD map. This representation is designed to quantify local Gibbs energy, translating theoretical spectral anomalies into a computable spatial fingerprint. Formally, the LAD map $\mathcal{L} \in \mathbb{R}^{H \times W}$ is computed by aggregating pairwise potentials within a local neighborhood. Specifically, for a pixel $p$, its value is calculated by the LAD operator:

\begin{equation}
    \mathcal{L}(p) = \frac{1}{|\mathcal{N}_p|} \sum_{q \in \mathcal{N}_p} \tanh \left( \frac{\| I(p) - I(q) \|_2^2}{\tau^2} \right).
    \label{eq:lad_map}
\end{equation}


Here, $\mathcal{N}_p$ denotes the set of pixels within a hollow $n \times n$ spatial window centered at $p$ (where $n=2m+1$). We adopt $m=1$ (i.e., $n=3$) for efficient local estimation. The hyperbolic tangent \textit{tanh} is employed to robustly saturate high-magnitude semantic gradients, strictly isolating the subtle noise residuals, with the sensitivity threshold controlled by $\tau$. Appendix~\ref{lem:lad-monotone} provides a detailed discussion of the underlying mechanism of the LAD operator. By aggregating these robust pairwise potentials, the LAD map transforms the input image into a \textit{Naturalness Response Field}. Authentic regions appear as high-energy areas due to preserved sensor noise, whereas manipulated regions manifest as low-energy voids, bounded by high-intensity spikes at the tampering trace, which aligns with our theoretical derivations. Figure~\ref{fig: lad} shows the visualization of the LAD map, further validating our statements.

\section{Model Architecture}
Building upon the extracted LAD map, we propose a coarse-to-fine forensic framework designed to locate manipulated regions with high precision. As illustrated in Figure \ref{fig: framework}, our architecture is composed of two synergistic modules: a lightweight \textbf{LAD Analysis Network (LAD-Net)} for initial texture-based detection, and a \textbf{SAM-based Semantic Refinement} module that leverages pre-trained semantic priors to delineate precise forgery boundaries.

\subsection{Localization Architecture}
\textbf{LAD Analysis Network.}
The LAD map $\mathcal{L}$ highlights energy anomalies in the image. To further extract these cues, we employ a lightweight CNN as the backbone, which we call LAD-Net and denote as $f_{\text{tex}}$. This network serves three distinct purposes: extracting a texture-rich feature representation, predicting a global authenticity score, and generating a preliminary coarse localization mask. Let $F_{\text{tex}} = f_{\text{tex}}(\mathcal{L})$ be the multi-scale texture features extracted from the LAD map. These features are forwarded to two parallel branches:


    \textbf{1. Global Classification Head:} This branch applies global average pooling followed by a fully connected layer to predict a scalar score $y_{\text{cls}} \in [0, 1]$, indicating the probability that the image has been manipulated.
    
    \textbf{2. Coarse Localization Head:} A lightweight decoder projects $F_{\text{tex}}$ back to the spatial resolution, outputting a coarse mask $M_{\text{coarse}} \in [0, 1]^{H \times W}$. This mask roughly localizes the tampered area through diffusion artifacts, serving as a spatial prior for the subsequent refinement stage.

\textbf{SAM-based Semantic Refinement.}
Although LAD-Net can effectively identify statistical anomalies, it cannot be directly relied upon for precise localization. This is because authentic images inherently contain low-energy regions (e.g., uniformly colored backgrounds) that are not indicative of manipulation. Therefore, a semantic-level analysis of the image is required to distinguish and exclude genuine low-energy regions. To address this, we integrate the \textit{Segment Anything Model (SAM)} architecture \cite{kirillov2023segment}. We adapt SAM \cite{carion2025sam} to the forgery localization task by conditioning it on both the texture anomalies and the coarse guidance from LAD-Net.

\noindent \textit{Feature Fusion and Adaptation.}
The target RGB image $I$ is processed by SAM's frozen image encoder \cite{bolya2025perception} to yield high-level semantic features, $F_{\text{sem}}$. Since $F_{\text{sem}}$ is optimized for natural object segmentation rather than forensics, we introduce a lightweight \textbf{Feature Adapter} to align it with the forensic domain. We fuse the semantic features with the texture features $F_{\text{tex}}$ from LAD-Net:
\begin{equation}
    F_{\text{adapted}} = \text{Adapter}(F_{\text{sem}} \oplus F_{\text{tex}}),
\end{equation}
where $\oplus$ denotes channel-wise concatenation followed by a $1\times1$ convolution. The adapter, consisting of residual blocks, learns to modulate the semantic features with forensic cues, ensuring the model focuses on manipulated objects.

\noindent \textit{Prompt-Guided Decoding.}
Leveraging SAM's promptable design, we utilize the coarse mask $M_{\text{coarse}}$ as a dense spatial prompt. The mask is fed into SAM's prompt encoder to generate dense positional embeddings $E_{\text{prompt}}$. Finally, SAM's mask decoder takes the artifact-aware features $F_{\text{adapted}}$ and the prompt embeddings $E_{\text{prompt}}$ as input. It attends to the relevant semantic regions highlighted by LAD-Net and outputs the final, fine-grained forgery mask $M_{\text{final}}$.

\subsection{Objective Function}
We employ a hybrid objective function to jointly optimize pixel-level localization precision and image-level detection accuracy. The total loss $\mathcal{L}$ is formulated as:
\begin{equation}
    L = L_{Dice} + \lambda_{focal}L_{focal}^{\alpha,\gamma} + \lambda_{IoU}L_{IoU} + \lambda_{det}L_{det}.
\end{equation}
Here, $L_{Dice}$ and $L_{focal}$ drive the segmentation process, ensuring precise boundary delineation while mitigating the inherent class imbalance between small forged regions and the authentic background. Complementing these, $L_{IoU}$ supervises the mask quality prediction via $\ell_1$ regression, and $L_{det}$ enforces global semantic consistency through binary cross-entropy for image-level forgery classification. Following SAM conventions~\cite{carion2025sam}, we set $\lambda_{focal}=20$ and $\lambda_{IoU}=1$. We set $\lambda_{det}=1$.

\begin{figure*}[ht] 
    \centering
    \includegraphics[width=1.0\linewidth]{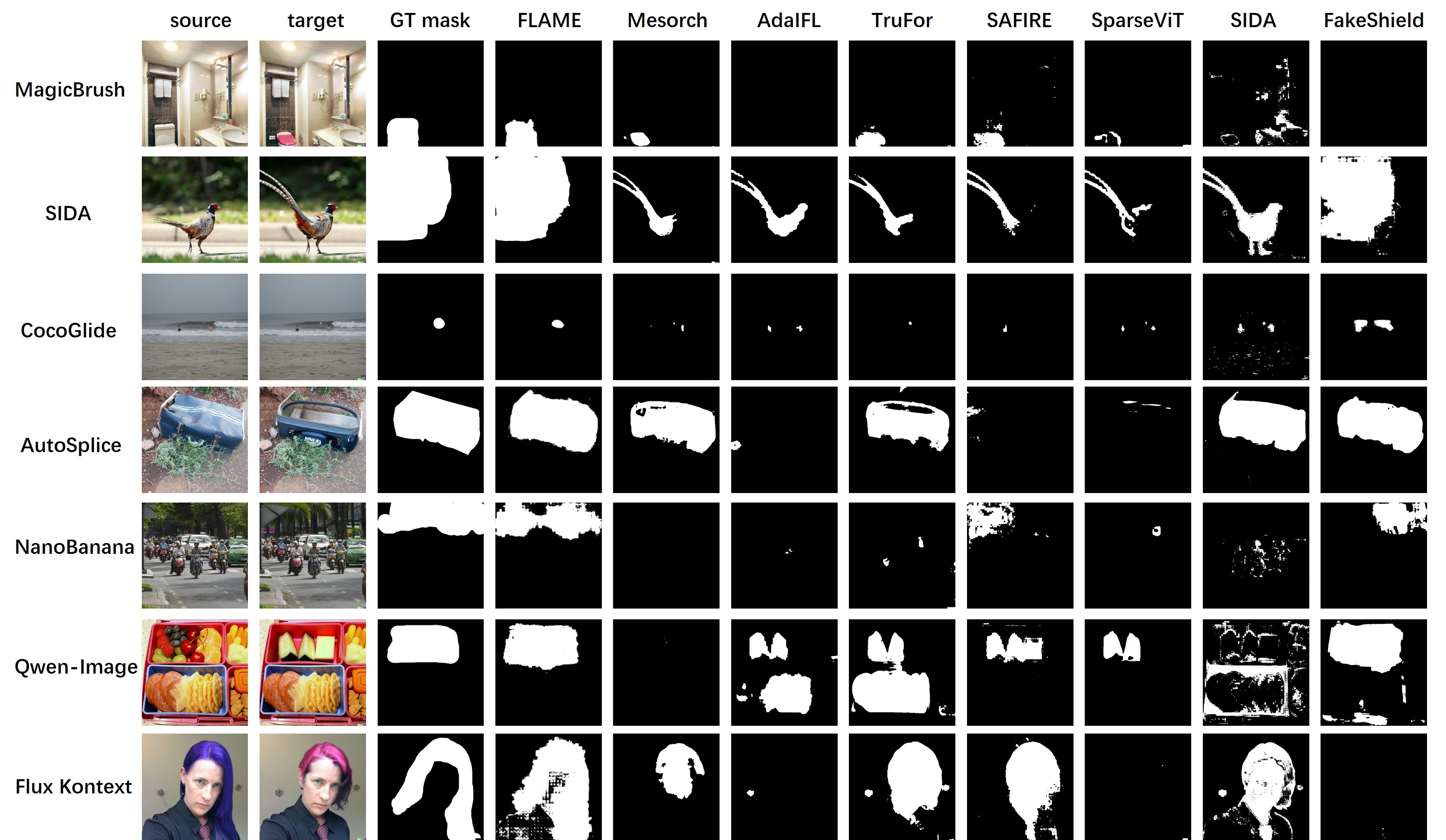}
    \caption{Overview of qualitative results across all models and datasets. Each row corresponds to a dataset sample and each column to the original and tampered images, the ground-truth mask, and each model’s predicted mask.}
    \label{fig:qualitative}
\end{figure*}

\subsection{EditStream: Evolutionary Data Synthesis}
\label{sec:editstream}

A critical bottleneck in current AI forensics is the temporal lag where static training datasets fail to represent the artifacts of evolving generative models. To bridge this gap, we introduce \textbf{EditStream}, a self-evolving data synthesis framework that dynamically adapts to the latest text-to-image advancements. Unlike traditional static benchmarks, EditStream operates as a closed-loop adversarial generator to compel the FLAME localizer to learn architecture-invariant representations rather than overfitting to specific models. The framework is driven by a collaborative multi-agent system that automates the entire lifecycle of training data generation (see Appendix~\ref{sec:appendix_editstream} for implementation details):

\textbf{1. Semantic Planning \& Masking:} To ensure high-quality, logic-consistent edits, we employ a visual-semantic planner based on Qwen-VL~\cite{yang2025qwen3} that analyzes scene semantics to generate editing instructions. These instructions are translated into precise pixel-level masks via SAM, guaranteeing that synthetic manipulations adhere to object boundaries without introducing background artifacts.

\textbf{2. Autonomous Model Scouting:} The core innovation of EditStream lies in its \textit{Autonomous Model Scouting Agent}. Powered by Llama 3~\cite{grattafiori2024llama}, this agent actively monitors open-source repositories (e.g., Hugging Face) to identify trending inpainting models. It automatically parses model cards, synthesizes execution code, and integrates new checkpoints into the generation pipeline without human intervention.

By continuously injecting samples from the latest generative architectures, EditStream constructs a moving target for the detector. This evolutionary mechanism fosters a virtuous cycle: as generative models improve in realism, FLAME is continuously exposed to harder, more diverse artifact distributions, thereby sustaining its robustness in the wild.

\begin{table*}[h]
\centering
\caption{Quantitative comparison of pixel-level localization. FLAME refers to the base model, while FLAME-F denotes the version fine-tuned via our EditStream pipeline. Underlined values indicate ID evaluation. The Average column calculates the mean performance across the last five datasets (from CoCoGLIDE to Flux Kontext) to assess generalization capabilities against OOD datasets.}
\label{tab:localization_results}
\resizebox{\textwidth}{!}{%
\begin{tabular}{l|cc|cc|cc|cc|cc|cc|cc|cc}
\toprule[1pt]
\multirow{2}{*}{Model} & \multicolumn{2}{c|}{MagicBrush} & \multicolumn{2}{c|}{SID} & \multicolumn{2}{c|}{CoCoGLIDE} & \multicolumn{2}{c|}{AutoSplice} & \multicolumn{2}{c|}{NanoBanana} & \multicolumn{2}{c|}{Qwen-Image} & \multicolumn{2}{c|}{Flux Kontext} & \multicolumn{2}{c}{Average} \\
\cmidrule{2-3} \cmidrule{4-5} \cmidrule{6-7} \cmidrule{8-9} \cmidrule{10-11} \cmidrule{12-13} \cmidrule{14-15} \cmidrule{16-17}
 & IoU$\uparrow$ & F1$\uparrow$ & IoU$\uparrow$ & F1$\uparrow$ & IoU$\uparrow$ & F1$\uparrow$ & IoU$\uparrow$ & F1$\uparrow$ & IoU$\uparrow$ & F1$\uparrow$ & IoU$\uparrow$ & F1$\uparrow$ & IoU$\uparrow$ & F1$\uparrow$ & IoU$\uparrow$ & F1$\uparrow$ \\
\midrule
SAFIRE & 0.297 & 0.485 & 0.214 & 0.274 & 0.394 & 0.467 & 0.192 & 0.251 & 0.114 & 0.153 & 0.217 & 0.269 & 0.190 & 0.225 & 0.221 & 0.273 \\
Mesorch & 0.150 & 0.211 & 0.124 & 0.219 & 0.382 & 0.450 & 0.210 & 0.283 & 0.102 & 0.139 & 0.216 & 0.286 & 0.124 & 0.180 & 0.207 & 0.268 \\
TruFor & 0.281 & 0.391 & 0.188 & 0.243 & 0.371 & 0.457 & 0.364 & 0.483 & 0.071 & 0.092 & 0.228 & 0.312 & 0.203 & 0.276 & 0.247 & 0.324 \\
AdaIFL & 0.122 & 0.215 & 0.128 & 0.190 & 0.209 & 0.266 & 0.227 & 0.337 & 0.091 & 0.126 & 0.066 & 0.092 & 0.073 & 0.104 & 0.133 & 0.185 \\
SIDA   & \underline{0.106} & \underline{0.180} & \underline{0.488} & \underline{0.565} & 0.375 & 0.465 & 0.393 & 0.483 & 0.005 & 0.012 & 0.089 & 0.143 & 0.092 & 0.149 & 0.191 & 0.250 \\
FakeShield & 0.091 & 0.126 & 0.117 & 0.137 & 0.138 & 0.150 & 0.238 & 0.296 & 0.086 & 0.095 & 0.098 & 0.110 & 0.096 & 0.108 & 0.131 & 0.152 \\
SparseViT  & 0.087 & 0.154 & 0.185 & 0.227 & 0.325 & 0.386 & 0.201 & 0.279 & 0.021 & 0.049 & 0.056 & 0.073 & 0.048 & 0.061 & 0.130 & 0.170 \\

\rowcolor{methodyellow}
\textbf{FLAME (ours)} & \textbf{\underline{0.538}} & \textbf{\underline{0.650}} & \textbf{\underline{0.580}} & \textbf{\underline{0.677}} & \textbf{0.469} & \textbf{0.576} & \textbf{0.501} & \textbf{0.624} & \textbf{0.216} & \textbf{0.295} & \textbf{0.321} & \textbf{0.408} & \textbf{0.285} & \textbf{0.391} & \textbf{0.358} & \textbf{0.459} \\

\rowcolor{methodgray}
\textbf{FLAME-F$^{\boldsymbol{\dagger}}$(ours)} & \textbf{\underline{0.507}} & \textbf{\underline{0.632}} & \textbf{\underline{0.569}} & \textbf{\underline{0.650}} & \textcolor[HTML]{C00000}{\textbf{0.481$\uparrow$}} & \textcolor[HTML]{C00000}{\textbf{0.602$\uparrow$}} & \textbf{0.498} & \textbf{0.618} & \textcolor[HTML]{C00000}{\textbf{0.391$\uparrow$}} & \textcolor[HTML]{C00000}{\textbf{0.454$\uparrow$}} & \textcolor[HTML]{C00000}{\textbf{\underline{0.482}$\uparrow$}} & \textcolor[HTML]{C00000}{\textbf{\underline{0.603}$\uparrow$}} & \textcolor[HTML]{C00000}{\textbf{0.446$\uparrow$}} & \textcolor[HTML]{C00000}{\textbf{0.548$\uparrow$}} & \textcolor[HTML]{C00000}{\textbf{0.460$\uparrow$}} & \textcolor[HTML]{C00000}{\textbf{0.565$\uparrow$}} \\

\bottomrule
\end{tabular}%
}
\end{table*}

\begin{table*}[h]
\centering
\caption{Quantitative comparison of image-level forgery detection. The experimental setup is consistent with Table \ref{tab:localization_results}.}
\label{tab:detection_results}
\resizebox{\textwidth}{!}{%
\begin{tabular}{l|cc|cc|cc|cc|cc|cc|cc|cc}
\toprule[1pt]
\multirow{2}{*}{Model} & \multicolumn{2}{c|}{MagicBrush} & \multicolumn{2}{c|}{SID} & \multicolumn{2}{c|}{CoCoGLIDE} & \multicolumn{2}{c|}{AutoSplice} & \multicolumn{2}{c|}{NanoBanana} & \multicolumn{2}{c|}{Qwen-Image} & \multicolumn{2}{c|}{Flux Kontext} & \multicolumn{2}{c}{Average} \\
\cmidrule{2-3} \cmidrule{4-5} \cmidrule{6-7} \cmidrule{8-9} \cmidrule{10-11} \cmidrule{12-13} \cmidrule{14-15} \cmidrule{16-17}
 & ACC$\uparrow$ & AP$\uparrow$ & ACC$\uparrow$ & AP$\uparrow$ & ACC$\uparrow$ & AP$\uparrow$ & ACC$\uparrow$ & AP$\uparrow$ & ACC$\uparrow$ & AP$\uparrow$ & ACC$\uparrow$ & AP$\uparrow$ & ACC$\uparrow$ & AP$\uparrow$ & ACC$\uparrow$ & AP$\uparrow$ \\
\midrule
SAFIRE & 0.525 & 0.640 & 0.596 & 0.570 & 0.481 & 0.481 & 0.462 & 0.623 & 0.570 & 0.592 & 0.454 & 0.644 & 0.616 & 0.492 & 0.517 & 0.566 \\
Mesorch & 0.630 & 0.693 & 0.625 & 0.648 & 0.606 & 0.713 & 0.558 & 0.597 & 0.468 & 0.492 & 0.598 & 0.749 & 0.547 & 0.647 & 0.555 & 0.640 \\
TruFor & 0.675 & 0.776 & 0.531 & 0.558 & 0.604 & 0.650 & 0.626 & 0.693 & 0.494 & 0.498 & 0.600 & 0.679 & 0.578 & 0.656 & 0.580 & 0.635 \\
AdaIFL & 0.526 & 0.593 & 0.546 & 0.572 & 0.519 & 0.547 & 0.546 & 0.575 & 0.474 & 0.483 & 0.547 & 0.562 & 0.523 & 0.549 & 0.522 & 0.543 \\
SIDA & \underline{0.812} & \underline{0.824} & \underline{0.725} & \underline{0.768} & 0.606 & 0.710 & 0.541 & 0.743 & 0.437 & 0.350 & 0.526 & 0.543 & 0.522 & 0.537 & 0.526 & 0.577 \\
FakeShield & 0.664 & 0.671 & 0.632 & 0.648 & 0.532 & 0.560 & 0.574 & 0.607 & 0.470 & 0.491 & 0.582 & 0.617 & 0.493 & 0.528 & 0.530 & 0.561 \\
SparseViT & 0.486 & 0.487 & 0.511 & 0.555 & 0.536 & 0.508 & 0.572 & 0.478 & 0.508 & 0.523 & 0.541 & 0.607 & 0.490 & 0.553 & 0.529 & 0.534 \\

\rowcolor{methodyellow}
\textbf{FLAME (ours)} & \textbf{\underline{0.901}} & \textbf{\underline{0.917}} & \textbf{\underline{0.916}} & \textbf{\underline{0.922}} & \textbf{0.732} & \textbf{0.789} & \textbf{0.714} & \textbf{0.763} & \textbf{0.715} & \textbf{0.747} & \textbf{0.781} & \textbf{0.803} & \textbf{0.754} & \textbf{0.781} & \textbf{0.639} & \textbf{0.677} \\

\rowcolor{methodgray}
\textbf{FLAME-F$^{\boldsymbol{\dagger}}$(ours)} & \textbf{\underline{0.893}} & \textbf{\underline{0.905}} & \textbf{\underline{0.911}} & \textbf{\underline{0.925}} & \textcolor[HTML]{C00000}{\textbf{0.754$\uparrow$}} & \textcolor[HTML]{C00000}{\textbf{0.801$\uparrow$}} & \textbf{0.698} & \textbf{0.756} & \textcolor[HTML]{C00000}{\textbf{0.812$\uparrow$}} & \textcolor[HTML]{C00000}{\textbf{0.805$\uparrow$}} & \textcolor[HTML]{C00000}{\textbf{\underline{0.821}$\uparrow$}} & \textcolor[HTML]{C00000}{\textbf{\underline{0.845}$\uparrow$}} & \textcolor[HTML]{C00000}{\textbf{0.792$\uparrow$}} & \textcolor[HTML]{C00000}{\textbf{0.828$\uparrow$}} & \textcolor[HTML]{C00000}{\textbf{0.715$\uparrow$}} & \textcolor[HTML]{C00000}{\textbf{0.755$\uparrow$}} \\

\bottomrule
\end{tabular}%
}
\end{table*}

\section{Experiments}

\subsection{Experimental Settings}

\noindent\textbf{Training Protocols.}
We adopt a two-stage training strategy. The \textbf{Base FLAME} model is trained exclusively on established datasets, \textbf{MagicBrush}~\cite{zhang2023magicbrush} and a subset of \textbf{SID}~\cite{huang2025sida}. To address emerging generative threats, the \textbf{Fine-tuned FLAME} adapts the base model with 500 images generated by the \textit{EditStream} pipeline, specifically \textbf{Qwen-Image-Edit}~\cite{wu2025qwen}. Crucially, to mitigate catastrophic forgetting during this adaptation, we implement a \textit{direct replay strategy}~\cite{zhou2024class}: each training batch mixes 20\% of the original training data (MagicBrush and SID) with the new fine-tuning samples.

\noindent\textbf{Testing Protocols.}
Our evaluation enforces a separation between distributions. For the Base FLAME, \textbf{In-Distribution (ID)} performance is assessed on the test splits of MagicBrush and SID. Conversely, \textbf{CoCoGLIDE}~\cite{guillaro2023trufor}, \textbf{AutoSplice}~\cite{jia2023autosplice}, and all three EditStream-generated datasets (\textbf{Nano Banana} \cite{comanici2025gemini}, \textbf{Qwen-Image-Edit}, and \textbf{Flux.1 Kontext} \cite{batifol2025flux}) serve as \textbf{Out-Of-Distribution (OOD)} benchmarks. For the Fine-tuned FLAME, the \textbf{Qwen-Image-Edit} dataset is reclassified as ID following its inclusion in the adaptation phase, while the remaining datasets retain their OOD status to evaluate cross-domain generalization.

\noindent\textbf{Baselines and Metrics.}
We benchmark against a suite of state-of-the-art methods: \textbf{TruFor}~\cite{guillaro2023trufor}, \textbf{SAFIRE}~\cite{kwon2025safire}, \textbf{Mesorch}~\cite{zhu2025mesoscopic}, \textbf{AdaIFL}~\cite{li2024adaifl}, \textbf{SparseViT}~\cite{su2025can}, \textbf{SIDA}~\cite{huang2025sida}, and \textbf{FakeShield}~\cite{xu2024fakeshield}. 
Performance is measured using pixel-level \textbf{Mean IoU} and \textbf{F1 Score} for localization, and image-level \textbf{Accuracy} (ACC) and \textbf{Average Precision} (AP) for detection capabilities. Unless otherwise specified, aggregate comparisons are based on OOD scores to assess cross-domain generalization. See Appendix \ref{sec:appendix_settings} for details.
Since AI-IFL requires localizing the edited subset within an otherwise authentic image, we evaluate methods that produce pixel-level masks. Image-level synthetic-image detectors are therefore outside the primary localization comparison, while Table~\ref{tab:detection_results} reports detection scores derived from localization outputs.

\subsection{Evaluation Results of Base FLAME}
\label{sec:eval_base}

\textbf{Localization Results.} We first evaluate the pixel-level localization performance of the Base FLAME model. As presented in Table~\ref{tab:localization_results}, our method establishes a new state-of-the-art, delivering the highest average performance across all OOD benchmarks in terms of both F1 and IoU scores. This superiority is particularly pronounced in the zero-shot generalization setting against emerging generative models (Nano Banana, Qwen-Image, and Flux). Existing methodologies typically rely on traditional IFL cues or target the conspicuous artifacts characteristic of early-stage generators, leading to significant performance degradation under this domain shift. In contrast, FLAME maintains robust localization capabilities. This empirical evidence validates our theoretical premise: the statistical energy anomalies captured by the LAD map are intrinsic to the diffusion process itself, rather than being specific to a particular model architecture or dataset distribution.

Qualitatively, Figure~\ref{fig:qualitative} visualizes the predicted masks. Compared to baselines, which often produce fragmented regions or fail to detect subtle manipulations, FLAME generates precise masks that align with the ground truth, accurately delineating forgeries even in scenarios involving complex background interactions or varying object scales.

\textbf{Detection Results.}
Beyond localization, we evaluate the image-level ability to distinguish between authentic and forged images. For baselines not explicitly designed for this task, we adopt the maximum value of the localization map as the detection statistic, as we empirically observed that it yields superior discrimination compared to mean pooling. Table~\ref{tab:detection_results} presents the Accuracy and AP scores. Consistent with the localization findings, FLAME achieves state-of-the-art detection performance. The high AP scores indicate that our model maintains a reliable ranking capability, effectively assigning higher anomaly scores to forged samples regardless of the specific generative architecture used.

\begin{table*}[t!]
\centering
\caption{Quantitative comparison results under different perturbations.}
\label{tab:robustness}
\resizebox{\textwidth}{!}{%
\begin{tabular}{l c c c c c c c c c c c c c c c c}
\toprule
\multirow{2}{*}{Perturbation} & \multicolumn{2}{c}{MagicBrush} & \multicolumn{2}{c}{SID} & \multicolumn{2}{c}{CoCoGLIDE} & \multicolumn{2}{c}{AutoSplice} & \multicolumn{2}{c}{NanoBanana} & \multicolumn{2}{c}{Qwen-Image-Edit} & \multicolumn{2}{c}{Flux Kontext} & \multicolumn{2}{c}{Average} \\
\cmidrule(lr){2-3}\cmidrule(lr){4-5}\cmidrule(lr){6-7}\cmidrule(lr){8-9}
\cmidrule(lr){10-11}\cmidrule(lr){12-13}\cmidrule(lr){14-15}\cmidrule(lr){16-17}
& IoU$\uparrow$ & F1$\uparrow$ & IoU$\uparrow$ & F1$\uparrow$ & IoU$\uparrow$ & F1$\uparrow$ & IoU$\uparrow$ & F1$\uparrow$ & IoU$\uparrow$ & F1$\uparrow$ & IoU$\uparrow$ & F1$\uparrow$ & IoU$\uparrow$ & F1$\uparrow$ & IoU$\uparrow$ & F1$\uparrow$ \\
\midrule
JPEG
& 0.267 & 0.380
& 0.425 & 0.507
& 0.413 & 0.529
& 0.486 & 0.601
& 0.208 & 0.288
& 0.216 & 0.312
& 0.209 & 0.302
& 0.329 & 0.417 \\
Gaussian Blur
& 0.374 & 0.498
& 0.568 & 0.653
& 0.420 & 0.535
& 0.498 & 0.622
& 0.202 & 0.285
& 0.319 & 0.405
& 0.275 & 0.381
& 0.379 & 0.483 \\
Gaussian Noise
& 0.216 & 0.305
& 0.374 & 0.473
& 0.330 & 0.438
& 0.420 & 0.563
& 0.197 & 0.282
& 0.179 & 0.272
& 0.175 & 0.263
& 0.270 & 0.371 \\
No Perturbation
& \textbf{0.538} & \textbf{0.650}
& \textbf{0.580} & \textbf{0.677}
& \textbf{0.469} & \textbf{0.576}
& \textbf{0.501} & \textbf{0.624}
& \textbf{0.216} & \textbf{0.295}
& \textbf{0.321} & \textbf{0.408}
& \textbf{0.285} & \textbf{0.391}
& \textbf{0.416} & \textbf{0.517} \\
\bottomrule
\end{tabular}%
}
\end{table*}

\begin{table}[ht]
\centering
\caption{Ablation study of different components.}
\label{tab:ablation_components}
\resizebox{0.52\linewidth}{!}{%
\begin{tabular}{l|cc}
\toprule
Configuration & IoU$\uparrow$ & F1$\uparrow$ \\
\midrule
w/o LAD Map & 0.294 & 0.380 \\
w/o Adapter & 0.313 & 0.392 \\
w/o SAM & 0.379 & 0.458 \\
\rowcolor{methodyellow}
\textbf{FLAME} & \textbf{0.567} & \textbf{0.662} \\
\bottomrule
\end{tabular}%
}
\end{table}

\subsection{Adaptability of Fine-tuned FLAME}
\label{sec:adaptability}

We analyze the performance of Fine-tuned FLAME (FLAME-F) based on the results in Tables~\ref{tab:localization_results} and~\ref{tab:detection_results}. The performance shifts reveal three critical capabilities:

\noindent\textbf{Rapid Target Adaptation.} 
FLAME-F exhibits a significant performance leap on the target Qwen-Image dataset, which validates that the synthetic samples from EditStream successfully capture the distinct artifact signatures of modern generative models, enabling the network to effectively align its feature space with the evolving distribution.

\noindent\textbf{Cross-Architecture Generalization.} 
Improvements extend beyond the target domain. Despite being fine-tuned exclusively on Qwen-generated data, FLAME-F demonstrates enhanced performance on other unseen contemporary models, such as Flux.1 and Nano Banana. This indicates that the model avoids overfitting to specific generator noise. Instead, it leverages the fresh data to learn a generalized representation of artifacts common to advanced diffusion architectures.

\noindent\textbf{Mitigation of Forgetting.} 
FLAME-F maintains high stability on historical benchmarks with negligible performance degradation. This confirms that our framework successfully balances plasticity, adapting to new domains, with stability, ensuring that the integration of emerging threats does not compromise defense capabilities.

\subsection{Robustness Evaluation}
\label{sec:robustness}

In real-world scenarios, images often undergo post-processing operations that attenuate subtle forensic traces. We therefore evaluate FLAME across all benchmarks under three common perturbations: JPEG compression (quality level 75), Gaussian blur (kernel size 3), and Gaussian noise (variance 5), with results summarized in Table~\ref{tab:robustness}. Performance decreases under all perturbations, which is expected because LAD is derived from local high-frequency statistics. The degradation is larger under JPEG compression and Gaussian noise: the former quantizes high-frequency coefficients, while the latter injects global variance, both of which obscure the energy contrast between authentic and generated regions. In contrast, Gaussian blur causes a smaller drop in our setting, suggesting that the region-level energy gap and boundary mismatch can remain informative even when the finest texture residuals are attenuated. Overall, these results show that the proposed statistical cue retains localization utility under common post-processing, while transformations that substantially rewrite local statistics remain more challenging.

\subsection{Ablation Study}
\textbf{Efficacy of Key Components.}
We assess the contribution of core modules by comparing the full model against three variants: (1) \textbf{w/o SAM Adapter}, which removes the feature fusion module; (2) \textbf{w/o LAD Map}, which replaces the Local Adjacency Discrepancy map with the raw RGB input; and (3) \textbf{w/o SAM}, which bypasses the SAM refinement to use the coarse difference mask directly. All ablations are trained on the training sets of SID and MagicBrush and evaluated on their validation splits. As shown in Table~\ref{tab:ablation_components}, performance degrades in all ablated settings. The most significant drop occurs without the LAD map, confirming that explicit artifact modeling is crucial for detecting subtle diffusion traces. Additionally, removing the SAM-related components leads to coarse, blocky predictions, underscoring their importance for pixel-perfect boundary localization.


\begin{table}[t]
\centering
\caption{Ablation study on different operators and kernel sizes.}
\label{tab:ablation_operators}
\resizebox{0.79\linewidth}{!}{%
\begin{tabular}{l|cc|cc|cc}
\toprule
\multirow{2}{*}{Operator} & \multicolumn{2}{c|}{$3\times3$} & \multicolumn{2}{c|}{$5\times5$} & \multicolumn{2}{c}{$7\times7$} \\
\cmidrule{2-3} \cmidrule{4-5} \cmidrule{6-7}
 & IoU$\uparrow$ & F1$\uparrow$ & IoU$\uparrow$ & F1$\uparrow$ & IoU$\uparrow$ & F1$\uparrow$ \\
\midrule
Min & 0.439 & 0.525 & 0.477 & 0.568 & 0.289 & 0.355 \\
Max & 0.451 & 0.542 & 0.445 & 0.531 & 0.310 & 0.384 \\
Avg & 0.502 & 0.596 & 0.478 & 0.572 & \textbf{0.458} & \textbf{0.545} \\
\rowcolor{methodyellow}
\textbf{LAD} & \textbf{0.567} & \textbf{0.662} & \textbf{0.521} & \textbf{0.615} & 0.457 & 0.542 \\
\bottomrule
\end{tabular}%
}
\end{table}

\paragraph{Impact of Neighborhood Size and Operators.}
We further investigate the influence of the local aggregation operator and the neighborhood window size $k \in \{3, 5, 7\}$. As detailed in Table \ref{tab:ablation_operators}, the proposed LAD operator consistently outperforms standard pooling mechanisms. While Min and Max suffer from sensitivity to outliers and Avg suppresses high-frequency forensic traces, LAD utilizes a robust potential function to capture the intrinsic energy anomalies. 

Regarding window size, we observe a monotonic performance decline as $k$ increases, confirming that larger kernels over-smooth local pixel variances. Consequently, the $3\times3$ LAD configuration proves optimal for preserving the subtle spectral signatures required for precise localization.

For ablation studies on individual parameters, please refer to the Appendix \ref{sec:appendix_exp}.

\section{Conclusion}
In this work, we propose FLAME, an AI-IFL framework grounded in the theoretical discovery of diffusion-induced spectral bias. Our approach utilizes a lightweight LAD-Net to capture intrinsic energy anomalies from the LAD map for preliminary localization. This coarse prediction serves as a spatial prompt for the SAM architecture, into which we integrate a compact adapter to fuse forensic cues for precise refinement. Furthermore, we introduce EditStream, a self-evolving data synthesis pipeline that addresses the limitations of static benchmarks by continuously integrating emerging generative models. Evaluations demonstrate that FLAME establishes a new state-of-the-art, effectively generalizing across unseen generative architectures.

\section*{Acknowledgements}
This work was partly supported by the New Generation Artificial Intelligence-National Science and Technology Major Project under No. 2025ZD0123503, NSFC under No. U2441239 and U24A20336 and No. 62502433, the China Postdoctoral Science Foundation under No. 2024M762829, 2025M781523 and 2025M781522, Zhejiang Key Laboratory of Decision Intelligence under No. 2025E10006, Zhejiang Provincial Natural Science Foundation Exploration of China under No. LMS26F020003, State Key Laboratory of Cryptography and Digital Economy Security under No. KFYB2504, the Zhejiang Provincial Natural Science Foundation under No. LD24F020002, and the ``Pioneer and Leading Goose" R\&D Program of Zhejiang under No. 2025C02033 and 2025C01082.

\section*{Impact Statement}
This paper aims to advance the field of AI forensics and digital media safety. As generative diffusion models become increasingly accessible and capable of producing photorealistic edits, the risks associated with visual deception have escalated significantly. Our work, FLAME, contributes a robust defense mechanism by theoretically identifying and localizing intrinsic statistical anomalies in AI-manipulated content, thereby restoring trust in digital media.

Furthermore, we introduce EditStream, an automated pipeline for synthesizing training data. While we acknowledge that automated generation tools carry a potential dual-use risk, our deployment of EditStream is strictly intended for adversarial robustness. It enables forensic models to evolve alongside rapid advancements in generative architectures. By establishing a virtuous cycle where defenses are continuously updated against emerging threats, our work seeks to mitigate the arms race in digital forgery and foster a safer information ecosystem.

\bibliography{example_paper}
\bibliographystyle{icml2026}

\newpage
\appendix
\onecolumn

\section{Limitations}
\label{sec:appendix_limitations}

FLAME is most effective when AI-edited regions leave detectable local statistical discrepancies and provide a usable boundary cue for refinement. Its performance can degrade in naturally smooth or repetitive authentic regions, where LAD responses may be difficult to distinguish from generated content and can produce diffuse coarse proposals or false positives. Very small edits, weak object boundaries, and scenes containing many similar candidate objects may also limit SAM-based refinement. Among editing operations, object removal is particularly challenging because inpainting often synthesizes background texture that is close to the surrounding authentic context and may not introduce a clear semantic transition. Finally, severe post-processing, extreme ISP denoising, or detector-aware anti-forensic perturbations can substantially alter the local statistics used by the LAD branch. Addressing these cases will require stronger context modeling and robustness-aware training in future AI forgery localization systems.

\section{Additional Visualizations}

In this section, we provide a comprehensive qualitative analysis to empirically validate the efficacy of the FLAME framework. Figure~\ref{fig:vis_all} visualizes the intermediate representations generated at each stage of the pipeline, tracking the transformation from raw statistical artifacts to precise semantic segmentation.

Columns 4 and 5 corroborate our theoretical analysis presented in Theorem \ref{thm:energy_gap}. As observed, authentic regions maintain high-frequency variations, appearing as high-energy noise, whereas diffusion-generated regions collapse into a state of artificial order and appear as low-energy voids.

Column 6 displays the \textit{Coarse Mask} predicted by the LAD-Net. By processing the LAD map, the LAD-Net successfully identifies the general location of the forgery, effectively acting as a global attention mechanism. However, relying solely on statistical anomalies can yield blob-like predictions that lack boundary precision. 
Column 7 (\textit{Results}) demonstrates the contribution of our SAM-based refinement module. Using the coarse mask as a spatial prompt, the adapter leverages SAM's powerful segmentation priors to snap the prediction to the object's semantic boundaries. This coarse-to-fine strategy ensures that FLAME achieves pixel-level accuracy even when the statistical traces at the boundary are faint or disrupted by compression.

\begin{figure}[h] 
    \centering
    \includegraphics[width=0.95\linewidth]{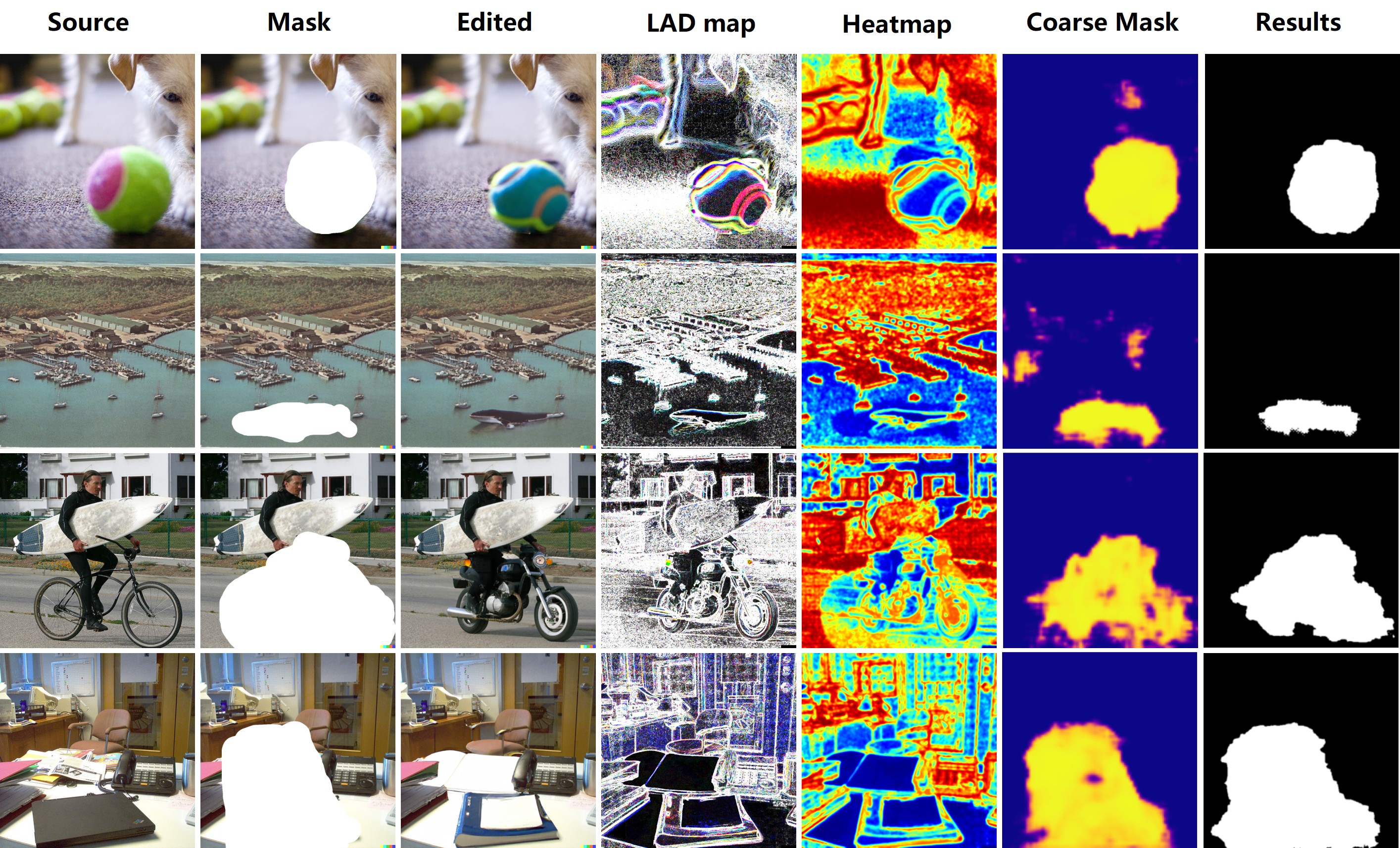}
    \caption{Step-by-step visualization of the FLAME pipeline.} 
    \label{fig:vis_all}
\end{figure}

\section{EditStream: Implementation and Data Generation Details}
\label{sec:appendix_editstream}

This section provides a comprehensive elucidation of the EditStream pipeline, complementing the high-level overview presented in Section \ref{sec:editstream}. We detail its architectural mechanics and the rigorous protocols employed for dataset creation.

\subsection{Visual-Semantic Reasoning and Mask Generation}
The first stage of the pipeline focuses on understanding the scene semantics to determine \textit{where} and \textit{what} to edit. We leverage the multimodal capabilities of \textbf{Qwen3-VL-8B-Instruct} \cite{yang2025qwen3} as the semantic planner. Given a raw input image, the MLLM is prompted to analyze the scene layout and identify salient objects or background regions suitable for manipulation. It outputs a structured description of the target region along with approximate spatial coordinates.

To translate these high-level semantic descriptions into precise pixel-level signals, we utilize SAM \cite{carion2025sam}. By injecting the text descriptions and spatial coordinates provided by the MLLM into SAM's promptable interface, we obtain high-fidelity binary masks that strictly adhere to object boundaries, minimizing boundary artifacts during the subsequent inpainting phase.

\subsection{Autonomous Model Scouting Agent}
\label{subsec:scouting_agent}

To mitigate the temporal lag between the emergence of new generative models and their inclusion in forensic benchmarks, we engineer the Autonomous Model Scouting Agent. This agent is not merely a static script but a dynamic system powered by Llama 3 \cite{grattafiori2024llama}, capable of reasoning, tool usage, and code synthesis.

\textbf{Tool-Augmented Search Mechanism.} 
Unlike passive crawlers, our agent interacts directly with the ecosystem via the official Hugging Face API, which we encapsulate as a structured tool, \texttt{search\_hf\_models()}.
\begin{enumerate}
    \item \textbf{Query Formulation:} The agent leverages the Hugging Face API, selecting the image-to-image task category to retrieve the most recent and widely adopted open-source image editing models.
    \item \textbf{Multi-Metric Filtering:} The raw search results are filtered through a composite scoring function. This function weighs quantitative metrics, download counts, likes, and trending status, alongside qualitative indicators such as ``recently updated'' timestamps. This ensures that the selected models are not only popular but represent the current state-of-the-art.
    \item \textbf{Candidate Selection:} Based on this analysis, the agent autonomously selects the top-$3$ most promising candidates that are architecturally distinct from the existing pool.
\end{enumerate}

\textbf{Dynamic Code Synthesis and Integration.} 
Once a candidate model is selected, the agent must integrate it into the EditStream pipeline without human engineering.
\begin{itemize}
    \item \textbf{Model Card Parsing:} The agent retrieves and analyzes the unstructured text within the candidate's Model Card (README). It extracts critical metadata: input constraints (e.g., resolution limits), dependency requirements, and the specific input format.
    \item \textbf{Execution Logic Generation:} Leveraging its code-generation capabilities, Llama 3 synthesizes a bespoke Python wrapper for the model. This wrapper standardizes the input-output interface, ensuring that diverse models, regardless of their internal implementation, conform to the EditStream pipeline's expected protocols.
    \item \textbf{Sandboxed Verification:} The synthesized code undergoes a dry-run verification in a sandboxed environment to ensure execution stability before deployment.
\end{itemize}

For detailed prompts of different components, please refer to Appendix \ref{app:prompt}.

\subsection{Dataset Creation Details}
\label{subsec:dataset_creation}

To construct a robust evaluation benchmark, we adhere to a strict data generation protocol that ensures diversity in semantic content, editing operations, and manipulation footprints.

\textbf{Complexity-Aware Source Selection.} 
We curate a pool of high-resolution authentic images sourced from open-domain datasets (e.g., subsets of LAION-5B \cite{schuhmann2022laion} and COCO \cite{lin2014microsoft}). To avoid trivial cases, we implement a multi-stage filtering protocol inspired by scene complexity heuristics:
\begin{enumerate}
    \item \textbf{Aesthetic Filtering:} We retain images with an aesthetic score $>6.0$ to ensure high visual fidelity.
    \item \textbf{Semantic Density:} We require the presence of at least three distinct object classes (detected via a pre-trained detector), ensuring the scene offers rich semantic context for manipulation.
    \item \textbf{Spatial Constraints:} To prevent single-object dominance, we exclude images where the largest bounding box exceeds 50\% of the total canvas area. This ensures the resulting forgeries involve realistic multi-object interactions rather than simple foreground replacements.
\end{enumerate}

\textbf{Balancing Editing Operations.} 
A core objective is to evaluate robustness across different manipulation types. We instruct the Semantic Planner (Qwen-VL) to categorize its editing instructions into four distinct operations:
\begin{enumerate}
    \item \textbf{Object Addition:} Inserting new objects into the scene (e.g., ``Add a red car in the driveway'').
    \item \textbf{Object Removal:} Erasing existing entities (e.g., ``Remove the pedestrian'').
    \item \textbf{Object Replacement:} Swapping an object with a semantically different one (e.g., ``Replace the dog with a cat'').
    \item \textbf{Background Alteration:} Modifying environmental textures or styles (e.g., ``Change the grass to snow'').
\end{enumerate}
The planner is prompted to sample these operations with a uniform probability distribution, ensuring balanced coverage across all four categories. To address this, we implement a \textit{dynamic priority mechanism}:
\begin{enumerate}
    \item \textbf{Feasibility Analysis:} For each source image, the Semantic Planner first identifies the subset of feasible operations that respect the scene's semantic and spatial constraints.
    \item \textbf{Frequency-Inverse Sampling:} From this feasible subset, the system selects the operation that is currently least represented in the generated dataset buffer. This active balancing strategy prevents the path of least resistance from dominating the distribution, ensuring approximately uniform coverage ($\sim$25\%) across all manipulation types.
\end{enumerate}

\textbf{Inpainting Area Distribution.} 
To prevent the localizer from overfitting to specific mask sizes, we enforce a stratified sampling strategy for the manipulated region's size. The generated masks vary from small-scale (occupying $<5\%$ of image area) to large-scale ($>30\%$), challenging the model's ability to detect both subtle and dominant forgeries.

\textbf{Admission-Control Filtering.}
Before inserting generated samples into the dataset, EditStream applies a two-stage admission-control filter to reduce low-quality or semantically inconsistent edits. First, a deterministic pre-check verifies mask validity, area ratio, and boundary alignment, discarding degenerate masks or edits with negligible visible changes. Second, a VLM-based verifier evaluates instruction fidelity, semantic consistency, and edit quality. Samples that fail either stage are discarded or regenerated, complementing the planning-stage safeguards with explicit sample-level verification.

\textbf{Dataset Composition and Model-Specific Handling.} 
The final dataset consists of 6,500 images generated using three distinct architectures:
\begin{itemize}
    \item \textbf{Qwen-Image-Edit \cite{wu2025qwen}:} We generated 3,000 samples. This model follows the standard inpainting paradigm, accepting a user-provided binary mask and text prompt. We fine-tune FLAME on 500 samples and test on 2,500 samples.
    \item \textbf{Flux.1 Kontext \cite{batifol2025flux}:} We generated 3,000 samples. Similar to Qwen, this model utilizes the masks generated by our SAM-based Semantic Planner.
    \item \textbf{NanoBanana \cite{comanici2025gemini}:} We generated 500 samples. A unique challenge with NanoBanana is that it operates as an instruction-to-image editor and does not support external mask injection. Instead, it autonomously determines the region to edit based on the textual prompt. To accommodate this, we inverted the workflow: we provide the edit instruction, and the model returns both the manipulated image and the mask where the image changed. We utilize this returned mask as the ground truth for training and evaluation.
\end{itemize}

\section{Detailed Experimental Settings}
\label{sec:appendix_settings}

In this section, we provide a granular breakdown of the experimental setup to ensure reproducibility. We detail the dataset compositions, the specific definitions of evaluation metrics, and the implementation hyperparameters used for the FLAME framework.

\subsection{Datasets}
To comprehensively evaluate the robustness and generalizability of FLAME, we curate a diverse collection of benchmarks encompassing both manually annotated edits and large-scale synthetic manipulations. The datasets are categorized into ID sets used for training/validation, and OOD sets used exclusively for zero-shot testing.

\begin{itemize}
    \item \textbf{MagicBrush} \cite{zhang2023magicbrush}: This dataset represents high-quality, instruction-guided image editing. It contains diffusion-based edits produced via DALL-E 2 \cite{ramesh2022hierarchical}, verified through rigorous human annotation. The dataset features multiple editing turns per image; we compute the ground-truth binary mask as the union of forged pixels across all rounds. The final composition includes 8,807 samples. We strictly follow the official split, utilizing the training set for model training, 528 samples for validation, and 1,053 samples for ID testing.

    \item \textbf{SID} \cite{huang2025sida}: A large-scale corpus comprising roughly 100,000 edits generated via Latent Diffusion Models (LDM) \cite{rombach2022high}. Given the sheer volume, we sample a subset to balance the training distribution. Specifically, we utilize 10,000 tampered samples for training and 528 for validation. For evaluation, we employ the full tampered test set to assess performance on LDM-based forgeries.

    \item \textbf{AutoSplice} \cite{jia2023autosplice}: This dataset focuses on text-prompt manipulated images using DALL-E 2. We treat the entire dataset as an OOD benchmark, allocating all 3,621 images exclusively for testing.

    \item \textbf{CoCoGLIDE} \cite{guillaro2023trufor}: A challenging evaluation set containing 512 images edited using the GLIDE model. We utilize all 512 samples for OOD testing to evaluate resilience against GLIDE-specific artifacts.

\end{itemize}

For datasets generated by EditStream, please refer to Appendix \ref{subsec:dataset_creation} for details.

\subsection{Evaluation Metrics}
We employ standard forensic metrics to quantify performance at both the pixel-level localization and image-level detection.

\subsubsection{Pixel-Level Localization}
Localization performance is measured by comparing the predicted binary mask $M_{pred}$ (obtained by thresholding the probability map at 0.5) with the ground-truth mask $M_{gt}$.

\begin{itemize}
    \item \textbf{Intersection over Union (IoU):} Measures the overlap between the predicted and ground-truth forged regions.
    \begin{equation}
        IoU = \frac{|M_{pred} \cap M_{gt}|}{|M_{pred} \cup M_{gt}|} = \frac{TP}{TP + FP + FN}
    \end{equation}
    where $TP$, $FP$, and $FN$ denote True Positives, False Positives, and False Negatives, respectively.
    
    \item \textbf{F1 Score:} The harmonic mean of precision and recall, providing a balanced view of localization accuracy, especially for small forged regions.
    \begin{equation}
        F1 = \frac{2 \cdot Precision \cdot Recall}{Precision + Recall} = \frac{2TP}{2TP + FP + FN}
    \end{equation}
\end{itemize}

\subsubsection{Image-Level Detection}
For the binary classification task (Tampered vs. Authentic), we derive an image-level anomaly score $y_{score}$ from the predicted mask. We empirically found that the maximum activation value yields the best separation: $y_{score} = \max_{(i,j)} M_{prob}(i,j)$.

\begin{itemize}
    \item \textbf{Accuracy (ACC):} The ratio of correctly classified images to the total number of images, using a decision threshold of 0.5.
    \item \textbf{Average Precision (AP):} The area under the Precision-Recall curve, providing a threshold-independent measure of the detector's discriminative ability.
\end{itemize}



\section{Additional Experimental Results}
\label{sec:appendix_exp}

In this section, we provide a granular analysis of the hyperparameters, robustness, and generalization capabilities of FLAME, supplementing the main experimental results.

\subsection{Sensitivity to Threshold \texorpdfstring{$\tau$}{tau}}
\label{subsec:sensitivity}

The temperature parameter $\tau$ in the LAD operator (Eq. 5) plays a pivotal role in regulating the sensitivity of the energy mapping. It controls the saturation point of the hyperbolic tangent function, effectively distinguishing between low-amplitude sensor noise and high-amplitude semantic edges. Figure \ref{fig:ablation_tau} illustrates the impact of varying $\tau$ on localization performance on the validation set.

We observe a convex performance trajectory. When $\tau$ is too small, the operator becomes over-sensitive to the natural thermal noise inherent in authentic regions, leading to a noisy LAD map and increased False Positives. Conversely, as $\tau$ increases, the non-linearity suppresses the subtle spectral anomalies required to identify diffusion artifacts, causing a decline in recall. The performance peaks at $\tau=0.004$, striking an optimal balance between noise suppression and artifact retention. Theoretically, this value aligns with the pixel quantization limit ($1/255 \approx 0.0039$), suggesting that this threshold effectively suppresses quantization noise while preserving the structural reconstruction differences indicative of forgery. Consequently, we fix $\tau=0.004$ for our experiments.

\begin{wrapfigure}{o}{0.37\textwidth}
  \centering
  \includegraphics[width=0.35\textwidth]{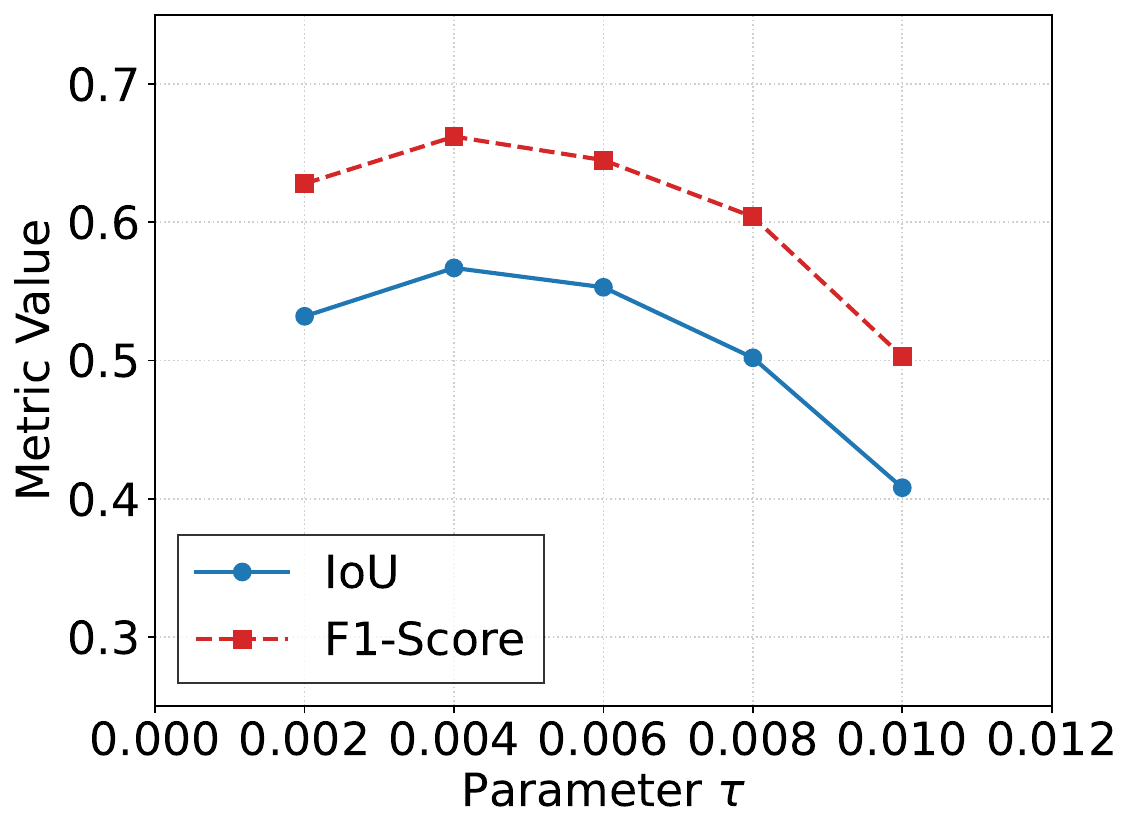}
  \caption{Sensitivity to threshold $\tau$.}
  \label{fig:ablation_tau}
\end{wrapfigure}

\subsection{Robustness on Image-level Detection}
\label{subsec:robustness_det}

\begin{table*}[ht]
\centering
\caption{Quantitative comparison results of image-level detection under different perturbations.}
\label{tab:robustness_new}
\resizebox{\textwidth}{!}{%
\begin{tabular}{l c c c c c c c c c c c c c c c c}
\toprule
\multirow{2}{*}{Perturbation}& \multicolumn{2}{c}{MagicBrush}& \multicolumn{2}{c}{SID}& \multicolumn{2}{c}{CoCoGLIDE}& \multicolumn{2}{c}{AutoSplice}& \multicolumn{2}{c}{NanoBanana}& \multicolumn{2}{c}{Qwen-Image-Edit}& \multicolumn{2}{c}{Flux Kontext}& \multicolumn{2}{c}{Average} \\
\cmidrule(lr){2-3}\cmidrule(lr){4-5}\cmidrule(lr){6-7}\cmidrule(lr){8-9}
\cmidrule(lr){10-11}\cmidrule(lr){12-13}\cmidrule(lr){14-15}\cmidrule(lr){16-17}
& Acc$\uparrow$ & AP$\uparrow$& Acc$\uparrow$ & AP$\uparrow$& Acc$\uparrow$ & AP$\uparrow$& Acc$\uparrow$ & AP$\uparrow$& Acc$\uparrow$ & AP$\uparrow$& Acc$\uparrow$ & AP$\uparrow$& Acc$\uparrow$ & AP$\uparrow$& Acc$\uparrow$ & AP$\uparrow$ \\
\midrule
JPEG & 0.541 & 0.612 & 0.589 & 0.688 & 0.542 & 0.587 & 0.509 & 0.533 & 0.486 & 0.497 & 0.552 & 0.583 & 0.557 & 0.592 & 0.539 & 0.585 \\
Gaussian Blur & 0.893 & 0.906 & 0.920 & 0.938 & 0.715 & 0.770 & 0.699 & 0.737 & 0.682 & 0.724 & 0.743 & 0.771 & 0.715 & 0.740 & 0.767 & 0.798 \\
Gaussian Noise & 0.592 & 0.629 & 0.621 & 0.702 & 0.561 & 0.599 & 0.542 & 0.580 & 0.521 & 0.549 & 0.545 & 0.576 & 0.564 & 0.593 & 0.564 & 0.604 \\
No Perturbation & \textbf{0.921} & \textbf{0.937} & \textbf{0.946} & \textbf{0.952} & \textbf{0.732} & \textbf{0.789} & \textbf{0.714} & \textbf{0.763} & \textbf{0.715} & \textbf{0.747} & \textbf{0.781} & \textbf{0.803} & \textbf{0.754} & \textbf{0.781} & \textbf{0.795} & \textbf{0.825} \\
\bottomrule
\end{tabular}%
}
\end{table*}

Table~\ref{tab:robustness_new} details the image-level forgery detection performance under various perturbations. Consistent with the pixel-level localization results in the main paper, the detection metrics exhibit specific sensitivities to signal degradation types.

\textbf{High-Frequency Sensitivity.} The model experiences the most notable performance drop under JPEG compression and Gaussian Noise. This aligns with our theoretical premise that the LAD map relies on high-frequency spectral discrepancies (Theorem \ref{thm:energy_gap}). JPEG compression quantizes high-frequency DCT coefficients, and Gaussian Noise injects global variance that masks the subtle energy gap between real and generated regions.

\textbf{Resilience to Blurring.} Interestingly, FLAME demonstrates stronger resilience to Gaussian Blur compared to noise injection. This suggests that while the finest spectral traces are attenuated by blurring, the structural boundary spike remains a sufficiently robust feature for the LAD-Net to discriminate between tampered and authentic images. Even under these challenging conditions, FLAME maintains an average AP significantly better than random guessing, underscoring its practical utility.

\subsection{Performance under Varying Mask Ratios}
\label{subsec:mask_ratio}

\begin{table}[h]
\centering
\caption{Quantitative results under different mask ratios.}
\label{tab:mask_ratio}
\begin{tabular}{l c c}
\toprule
Mask Ratio & IoU$\uparrow$ & F1$\uparrow$ \\
\midrule
Small   & 0.381 & 0.458 \\
Medium  & 0.612 & 0.675 \\
Large   & 0.742 & 0.794 \\
\midrule
Overall & 0.567 & 0.652 \\
\bottomrule
\end{tabular}
\end{table}

To evaluate the model's sensitivity to the scale of manipulation, we stratify the test samples into three categories based on the ratio of the forged area to the total image size: Small ($<5\%$), Medium ($5\%-30\%$), and Large ($>30\%$). Table \ref{tab:mask_ratio} summarizes the results.

As anticipated in semantic segmentation tasks, performance correlates positively with object size. Large manipulations yield the highest IoU due to the abundance of statistical evidence and stronger semantic guidance for the SAM module. However, FLAME maintains a commendable IoU of $0.381$ even on Small forgeries. This indicates that the prompt-guided decoding mechanism effectively leverages the coarse cues from the LAD map to localize minute alterations, mitigating the issue where small objects are often lost during downsampling in standard CNN architectures.

\subsection{Performance across Different Editing Types}
\label{subsec:edit_types}

\begin{table}[t]
\centering
\caption{Performance comparison across different editing types.}
\label{tab:editing_types}
\resizebox{0.7\textwidth}{!}{%
\begin{tabular}{l c c c c c c c c}
\toprule
\multirow{2}{*}{Dataset} & \multicolumn{2}{c}{Object Addition} & \multicolumn{2}{c}{Object Removal} & \multicolumn{2}{c}{Object Replacement} & \multicolumn{2}{c}{Background Alteration} \\
\cmidrule(lr){2-3} \cmidrule(lr){4-5} \cmidrule(lr){6-7} \cmidrule(lr){8-9}
& IoU$\uparrow$ & F1$\uparrow$ & IoU$\uparrow$ & F1$\uparrow$ & IoU$\uparrow$ & F1$\uparrow$ & IoU$\uparrow$ & F1$\uparrow$ \\
\midrule
Qwen & 0.367 & 0.429 & 0.187 & 0.263 & 0.405 & 0.463 & 0.258 & 0.317 \\
Flux & 0.307 & 0.375 & 0.165 & 0.228 & 0.371 & 0.420 & 0.261 & 0.327 \\
\bottomrule
\end{tabular}%
}
\end{table}

\begin{wrapfigure}{o}{0.37\textwidth}
  \centering
  \includegraphics[width=0.35\textwidth]{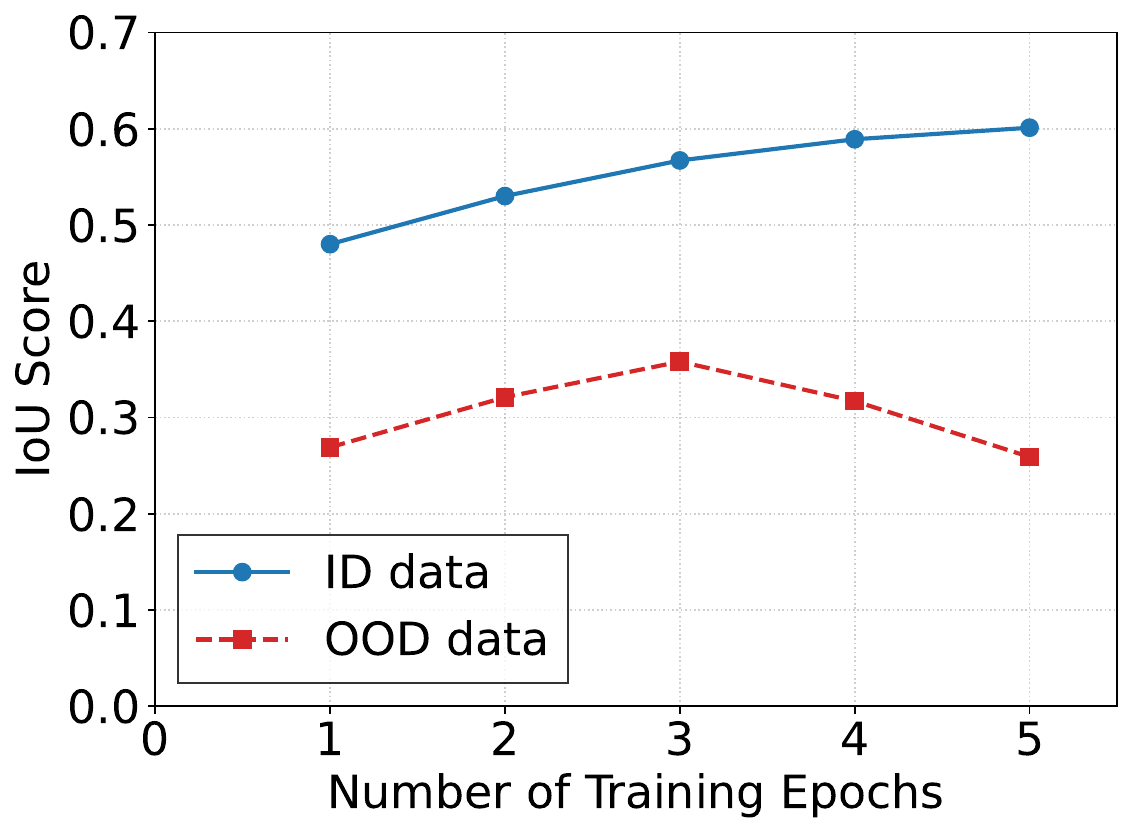}
  \caption{Ablation study on training epochs.}
  \label{fig:ablation_epoch}
\end{wrapfigure}

We further dissect the model's performance across the four editing operations defined in the EditStream pipeline: Object Addition, Removal, Replacement, and Background Alteration. Table \ref{tab:editing_types} presents the comparative analysis on the Qwen and Flux benchmarks.

We observe that \textit{Object Replacement} and \textit{Addition} consistently yield higher detection scores compared to \textit{Object Removal}. We hypothesize this stems from spectral disparity: inserting a foreign object or replacing an existing one introduces a distinct latent representation that often clashes spectrally with the original scene. In contrast, \textit{Object Removal} typically involves inpainting background textures (e.g., extending grass or walls). The generated background texture tends to be spectrally more similar to the surrounding authentic context, resulting in a subtler energy gap and making localization more challenging.

\subsection{Ablation Study on Training Epochs}
\label{subsec:ablation_epochs}

Finally, we investigate the impact of training duration on generalization. Figure~\ref{fig:ablation_epoch} plots the IoU scores for ID and OOD datasets across training epochs.

The results highlight a classic overfitting-to-distribution phenomenon. While performance on ID data (MagicBrush and SID) continues to improve monotonically with longer training, the zero-shot OOD performance (averaged across unseen generative models) peaks at Epoch 3 and subsequently degrades. This suggests that prolonged training causes the model to overfit to the specific artifact signatures of the training generators, reducing its ability to generalize to novel architectures like Flux or Qwen. Based on this observation, we employ an early stopping strategy at Epoch 3 to maximize cross-model robustness.

\section{Detailed Architecture of FLAME Components}
\label{app:arch_details}

In this section, we provide a granular description of the two core trainable components within the FLAME framework: the LAD-Net backbone and the Feature Adapter for SAM.

\subsection{LAD-Net}
The LAD-Net is designed to act as a specialized forensic feature extractor. Based on the implementation, the architecture consists of two key component: the Feature Encoder, and the Learnable Mask Compressor.

\textbf{Deep Separable Block (DSBlock).}
The core building block of the LAD-Net feature extractor is the DSBlock, designed to maximize the receptive field while maintaining parameter efficiency. As shown in the code, the block is composed of a specific sequence of layers:
\begin{enumerate}
    \item \textbf{Dilated Convolution:} A $3\times3$ dilated convolution (dilation rates increase with depth) is used to capture long-range statistical dependencies without reducing resolution.
    \item \textbf{Instance Normalization (IN):} A critical design choice in our architecture is the replacement of standard Batch Normalization with Instance Normalization. Since forensic artifacts are often image-specific, Batch Normalization tends to overfit to the training domain's statistics. IN operates on individual samples, significantly enhancing the model's OOD robustness against unseen generative architectures.
    \item \textbf{Squeeze-and-Excitation (SE):} We incorporate SE blocks with a reduction ratio of 8. This mechanism adaptively recalibrates channel-wise feature responses, allowing the network to suppress channels carrying irrelevant semantic information and emphasize those rich in forensic traces.
\end{enumerate}

\textbf{Learnable Mask Compressor.}
To generate the coarse mask $M_{coarse}$ that serves as the spatial prompt for SAM 3, we avoid simple mean pooling. Instead, we employ a lightweight sub-network consisting of a projection convolution ($1\times1$), followed by Instance Normalization, ReLU activation, and a final prediction convolution. This allows the model to learn an optimal non-linear projection from the high-dimensional forensic feature space to a single-channel probability map.

\subsection{Feature Adapter Module}
The Feature Adapter bridges the gap between the forensic domain and the semantic domain. To preserve the powerful segmentation priors of the pre-trained SAM, we design the adapter as a residual module that injects forensic cues without disrupting the pre-trained semantic features.

Let $F_{sem} \in \mathbb{R}^{C \times H \times W}$ be the semantic features from the SAM encoder, and $F_{tex}$ be the texture features from LAD-Net. The adaptation process follows a bottleneck design:

\begin{enumerate}
    \item \textbf{Feature Alignment:} The forensic features $F_{tex}$ are spatially interpolated to match the resolution of $F_{sem}$.
    \item \textbf{Cross-Modal Fusion:} We concatenate the unadapted semantic features with the aligned forensic features. A fusion MLP ($1\times1$ Conv) compresses this concatenated representation into a hidden dimension $D_{hidden}$.
    \item \textbf{Residual Injection:} The fused features pass through a non-linear bottleneck before being projected back to the original semantic channel dimension $C$. This produces a residual delta $\Delta F$.
\end{enumerate}

The final adapted feature map $F_{adapted}$ is obtained via:
\begin{equation}
    F_{adapted} = F_{sem} + \text{Adapter}( [F_{sem} \parallel \text{Resize}(F_{tex})] )
\end{equation}
We initialize the adapter weights using a Xavier uniform distribution to ensure stability during the early stages of fine-tuning. This residual formulation allows the gradient to flow directly through the semantic branch, ensuring that the forensic injection refines rather than overrides the object segmentation capabilities of SAM.

\section{Complete Theoretical Analysis for Artifact Separability}
\label{app:theory}

This appendix expands Section \ref{sec:theory} in the main paper by (1) formalizing the full latent-diffusion inpainting pipeline with explicit equations, and (2) providing a complete, step-wise theoretical justification for artifact separability and the design of the LAD map.

\subsection{Preliminaries: Masked Latent Diffusion for Local Image Editing}
\label{app:preliminaries-ldm-inpaint}

\paragraph{Notation.}
Let $x \in \mathbb{R}^{H \times W \times 3}$ be an RGB image defined on a pixel lattice $\Omega := \{1,\dots,H\}\times \{1,\dots,W\}$.
Let $m \in \{0,1\}^{H\times W}$ be a binary mask, where $m(p)=1$ indicates pixels to be edited (tampered region) and $m(p)=0$ indicates pixels to be preserved (authentic context).
We denote the edited set by $\mathcal{M}:=\{p\in\Omega: m(p)=1\}$ and the preserved set by $\mathcal{U}:=\Omega\setminus\mathcal{M}$.

We consider a LDM \cite{ho2020denoising, rombach2022high} consisting of:
(i) a VAE encoder $E_\phi$ and decoder $D_\psi$,
(ii) a time-conditional denoiser $\varepsilon_\theta(\cdot,t,c)$, optionally conditioned on text $c$.

Let the latent resolution be $h\times w$ with channel dimension $c_z$ and downsampling factor $f$.
We use $\odot$ for element-wise products.
We downsample the mask to the latent resolution by a deterministic operator $\mathcal{S}_f$:
\begin{equation}
m^{(z)} := \mathcal{S}_f(m) \in [0,1]^{h\times w}.
\end{equation}
For clarity of analysis we treat $m^{(z)}$ as binary mask. Soft masks can be handled by the same derivations with minor modifications.

\noindent\textbf{Step 1: VAE encoding.}
The VAE defines an approximate posterior $q_\phi(z|x)$; for most practical pipelines, the latent used for editing is the posterior mean:
\begin{equation}
z_0^{\text{ref}} := E_\phi(x) \in \mathbb{R}^{h\times w\times c_z}.
\end{equation}
The mapping $E_\phi$ compresses spatial information, which inevitably reduces high-frequency components.

\noindent\textbf{Step 2: Multi-step denoising with hard latent replacement.}
We use the standard Gaussian forward diffusion in latent space:
\begin{align}
q(z_t|z_{t-1}) &= \mathcal{N}\!\left(\sqrt{\alpha_t}\, z_{t-1}, (1-\alpha_t)I\right), \\
\bar{\alpha}_t &:= \prod_{s=1}^t \alpha_s, \qquad
q(z_t|z_0)=\mathcal{N}\!\left(\sqrt{\bar{\alpha}_t}\,z_0, (1-\bar{\alpha}_t)I\right), \\
z_t &= \sqrt{\bar{\alpha}_t}z_0 + \sqrt{1-\bar{\alpha}_t}\,\epsilon,\qquad \epsilon\sim\mathcal{N}(0,I).
\end{align}
The denoiser is trained with the simplified noise-prediction loss \cite{ho2020denoising}:
\begin{equation}
\mathcal{L}_{\mathrm{LDM}}
:= \mathbb{E}_{z_0,t,\epsilon}\big[\|\epsilon-\varepsilon_\theta(z_t,t,c)\|_2^2\big].
\end{equation}
During sampling, a common DDPM-style reverse update is:
\begin{align}
\hat{z}_0(z_t) &:= \frac{1}{\sqrt{\bar{\alpha}_t}}\Big(z_t - \sqrt{1-\bar{\alpha}_t}\,\varepsilon_\theta(z_t,t,c)\Big), \label{eq:zhat0}\\
\mu_\theta(z_t,t,c) &:= \frac{1}{\sqrt{\alpha_t}}\Big(z_t - \frac{1-\alpha_t}{\sqrt{1-\bar{\alpha}_t}}\,\varepsilon_\theta(z_t,t,c)\Big), \label{eq:muth}\\
z_{t-1}^{\text{gen}} &:= \mu_\theta(z_t,t,c) + \sigma_t \eta,\qquad \eta\sim\mathcal{N}(0,I), \label{eq:reverse_step}
\end{align}
where $\sigma_t$ depends on the chosen sampler (DDPM/DDIM/PLMS) and can be set to $0$ in deterministic sampling.

\emph{Masked replacement (the key inpainting constraint).}
To preserve the authentic region, we force the unmasked latent to match the original image latent \emph{at every reverse step}. Specifically, we sample a reference latent at the same noise level from the forward process of the original image:
\begin{equation}
z_{t-1}^{\text{ref}}
\sim q(z_{t-1}\,|\,z_0^{\text{ref}}),
\qquad\text{equivalently}\qquad
z_{t-1}^{\text{ref}} = \sqrt{\bar{\alpha}_{t-1}}\,z_0^{\text{ref}} + \sqrt{1-\bar{\alpha}_{t-1}}\,\epsilon_{t-1}.
\label{eq:zref_forward}
\end{equation}
Then we composite:
\begin{equation}
z_{t-1}
:= m^{(z)} \odot z_{t-1}^{\text{gen}}
\;+\; (1-m^{(z)}) \odot z_{t-1}^{\text{ref}}.
\label{eq:masked_composite}
\end{equation}
This procedure is consistent with diffusion-based inpainting via conditioning on known pixels/regions \cite{lugmayr2022repaint} and is widely adopted in latent inpainting pipelines.

\noindent\textbf{Step 3: VAE decoding.}
After $T$ reverse steps, we obtain $z_0$ and decode to the edited image:
\begin{equation}
\hat{x} := D_\psi(z_0)\in \mathbb{R}^{H\times W\times 3}.
\end{equation}
Note that \eqref{eq:masked_composite} is performed in \emph{latent space}; because VAE latents are generally not pixel-local, latent compositing is not strictly equivalent to pixel-space compositing and can introduce seam artifacts near $\partial\mathcal{M}$ (formalized later).

\paragraph{Algorithm (masked latent diffusion sampling).}
For completeness, the full inpainting sampler is summarized below.
\begin{algorithm}[ht]
\caption{Masked Latent Diffusion Inpainting}
\label{alg:masked-ldm}
\textbf{Input:} image $x$, mask $m$, condition $c$, diffusion steps $T$ \\
1: $z_0^{\mathrm{ref}} \leftarrow E_\phi(x)$; $m^{(z)}\leftarrow \mathcal{S}_f(m)$ \\
2: Initialize $z_T \sim \mathcal{N}(0,I)$ \\
3: \textbf{for} $t=T,\dots,1$ \textbf{do} \\
    \hspace{1em}3.1: $z_{t-1}^{\mathrm{gen}} \leftarrow$ reverse update using $\varepsilon_\theta(\cdot,t,c)$ \\
    \hspace{1em}3.2: $z_{t-1}^{\mathrm{ref}} \leftarrow q(z_{t-1}\,|\,z_0^{\mathrm{ref}})$ \\
    \hspace{1em}3.3: $z_{t-1} \leftarrow m^{(z)}\odot z_{t-1}^{\mathrm{gen}} + (1-m^{(z)})\odot z_{t-1}^{\mathrm{ref}}$ \\
4: $\hat{x}\leftarrow D_\psi(z_0)$ \\
\textbf{Output:} edited image $\hat{x}$
\end{algorithm}

\noindent\textbf{Remark.}
The subsequent theory is \emph{agnostic} to the exact sampler (DDPM vs.\ DDIM) as long as (1) the reverse chain is implemented with a learned denoiser and (2) hard replacement \eqref{eq:masked_composite} is enforced.

\subsection{Artifact Separability via Gibbs Energy: Complete Analysis and Proofs}
\label{app:artifact-separability-complete}

We now provide a rigorous justification for three claims:
\begin{enumerate}
\item \textbf{Inevitable artifacts:} diffusion-based generative pipelines inevitably introduce statistical artifacts relative to physical imaging.
\item \textbf{Inpainting amplification:} in masked inpainting, artifacts in the generated region $\mathcal{M}$ are significantly stronger than in the preserved region $\mathcal{U}$, and a sharp boundary spike emerges near $\partial\mathcal{M}$.
\item \textbf{Local separability:} these artifacts manifest as anomalies in \emph{adjacent-pixel relations}; thus local neighbor differences yield discriminative features for real/fake separation, motivating LAD.
\end{enumerate}

\subsubsection{Energy model on the image lattice}
\label{app:energy-model}

Following the main paper, we model the image lattice as a Gibbs Random Field (GRF) \cite{derin1987modeling}.
Let $I(p)\in\mathbb{R}^3$ denote the RGB vector at pixel $p\in\Omega$.
Let $\mathcal{N}_p$ be a fixed neighborhood system (e.g., 4-neighborhood).
Define a pairwise potential
\begin{equation}
V(p,q) := \rho\!\left(\|I(p)-I(q)\|_2\right),\qquad (p,q)\in\Omega\times\Omega,~ q\in\mathcal{N}_p,
\label{eq:pairwise-potential}
\end{equation}
where $\rho:\mathbb{R}_{\ge 0}\to\mathbb{R}_{\ge 0}$ is monotone increasing.
A canonical choice for analysis is $\rho(r)=r^2$; the LAD map later uses a robust bounded surrogate via $\tanh(\cdot)$.

Define the local energy at a pixel as
\begin{equation}
E_{\mathrm{loc}}(p)
:= \frac{1}{|\mathcal{N}_p|}\sum_{q\in \mathcal{N}_p} \|I(p)-I(q)\|_2^2.
\label{eq:local-energy}
\end{equation}
For any set $\mathcal{S}\subseteq\Omega$, define the average local energy
\begin{equation}
E_{\mathrm{loc}}(\mathcal{S})
:= \mathbb{E}_{p\sim \mathrm{Unif}(\mathcal{S})}\big[E_{\mathrm{loc}}(p)\big].
\label{eq:set-local-energy}
\end{equation}

\subsubsection{Physical imaging induces an irreducible local energy floor}
\label{app:noise-floor}

We formalize the ``physical entropy'' argument by a standard sensor noise model.

\begin{assumption}[Additive sensor noise floor]
\label{assump:sensor-noise}
An authentic captured image $x$ can be locally modeled as
\begin{equation}
I(p) = S(p) + N(p),
\end{equation}
where $S(p)\in\mathbb{R}^3$ is the clean irradiance-dependent signal, and $N(p)$ is stochastic sensor noise with:
(i) $\mathbb{E}[N(p)]=0$,
(ii) $\mathrm{Cov}(N(p))=\sigma^2 I_3$ with $\sigma^2>0$,
(iii) $N(p)$ and $N(q)$ are independent for $p\neq q$ (pixel-wise temporal noise).
This captures the commonly used Poisson--Gaussian approximation of shot + read noise.
\end{assumption}

\begin{lemma}[Irreducible neighbor-difference energy]
\label{lem:irreducible-energy}
Under Assumption~\ref{assump:sensor-noise}, for any adjacent $(p,q)$,
\begin{equation}
\mathbb{E}\big[\|I(p)-I(q)\|_2^2 ~\big|~ S\big]
= \|S(p)-S(q)\|_2^2 + 2\cdot 3\sigma^2
\;\ge\; \xi_{\mathrm{noise}},
\qquad \xi_{\mathrm{noise}}:=6\sigma^2.
\label{eq:noise-floor}
\end{equation}
Consequently, $\mathbb{E}[E_{\mathrm{loc}}(p)\,|\,S] \ge \xi_{\mathrm{noise}}$ for all $p$.
\end{lemma}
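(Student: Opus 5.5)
The plan is a direct second-moment computation. Condition on the clean signal $S$ and expand the squared neighbor difference into signal, noise and cross terms.

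Fix adjacent pixels $p,q$ and write $I(p)-I(q) = \big(S(p)-S(q)\big) + \big(N(p)-N(q)\big)$. Expanding the squared norm gives three pieces:
\begin{equation*}
\|I(p)-I(q)\|_2^2 = \|S(p)-S(q)\|_2^2 + 2\big(S(p)-S(q)\big)^\top\big(N(p)-N(q)\big) + \|N(p)-N(q)\|_2^2 .
\end{equation*}
We take the conditional expectation given $S$ term by term.

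\textbf{Signal term.} The first term is $S$-measurable, so it passes through the expectation unchanged.

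\textbf{Cross term.} This term vanishes by Assumption~\ref{assump:sensor-noise}(i), since $\mathbb{E}[N(p)]=\mathbb{E}[N(q)]=0$. Here I will state explicitly that the noise is taken independent of, or at least conditionally mean-zero given, $S$. This is the only subtle point: the Poisson component of real sensor noise is signal-dependent. However, a variance $\sigma^2$ that depends on $S$ would still give a zero-mean cross term, and the floor would then hold with $\sigma^2$ replaced by a lower bound on the variance.

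\textbf{Noise term.} For the last term, write $\|N(p)-N(q)\|_2^2 = \|N(p)\|^2 + \|N(q)\|^2 - 2N(p)^\top N(q)$.
\begin{itemize}
\item Using (ii), $\mathbb{E}\|N(p)\|^2 = \operatorname{tr}\operatorname{Cov}(N(p)) = 3\sigma^2$, and likewise for $q$.
\item By the independence in (iii) together with zero mean, $\mathbb{E}[N(p)^\top N(q)] = \mathbb{E}[N(p)]^\top\mathbb{E}[N(q)] = 0$.
\end{itemize}
Hence the noise term has expectation $6\sigma^2$.

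Summing the three pieces yields the stated identity. Dropping the nonnegative signal term $\|S(p)-S(q)\|^2 \ge 0$ gives the bound $\xi_{\mathrm{noise}} = 6\sigma^2$.

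For the consequence, average over $q\in\mathcal{N}_p$. By linearity of conditional expectation applied to \eqref{eq:local-energy}, $\mathbb{E}[E_{\mathrm{loc}}(p)\mid S]$ is the mean of the per-pair expectations. Each of these is at least $\xi_{\mathrm{noise}}$, so their average is too. This holds for every $p$, and averaging once more over $p\sim\mathrm{Unif}(\mathcal{S})$ extends the floor to $E_{\mathrm{loc}}(\mathcal{S})$ for any set $\mathcal{S}$.
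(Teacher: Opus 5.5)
Your proof is correct and follows essentially the same route as the paper: you split $I(p)-I(q)$ into signal and noise differences, kill the cross term by zero mean, get $\mathbb{E}\|N(p)-N(q)\|_2^2 = 6\sigma^2$ from the covariance trace and independence, and then average over $\mathcal{N}_p$. Your explicit remark that the noise must be mean-zero, with covariance $\sigma^2 I_3$ and independent across pixels, \emph{conditionally on} $S$ is a useful clarification of a point the paper's unconditional assumption leaves implicit; it does not change the approach.
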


\begin{proof}
Write $\Delta_S := S(p)-S(q)$ and $\Delta_N := N(p)-N(q)$, so $I(p)-I(q)=\Delta_S+\Delta_N$.
Then
\[
\mathbb{E}\big[\|\Delta_S+\Delta_N\|_2^2 \mid S\big]
= \|\Delta_S\|_2^2 + 2\langle \Delta_S, \mathbb{E}[\Delta_N]\rangle + \mathbb{E}\big[\|\Delta_N\|_2^2\big].
\]
The cross term vanishes since $\mathbb{E}[\Delta_N]=0$.
Moreover, by independence and $\mathrm{Cov}(N(p))=\sigma^2 I_3$,
\[
\mathbb{E}\big[\|\Delta_N\|_2^2\big]
= \mathbb{E}\big[\|N(p)\|_2^2\big]+\mathbb{E}\big[\|N(q)\|_2^2\big]
= \mathrm{tr}(\sigma^2 I_3)+\mathrm{tr}(\sigma^2 I_3)=6\sigma^2.
\]
Averaging over $q\in\mathcal{N}_p$ yields the same lower bound for $E_{\mathrm{loc}}(p)$.
\end{proof}

Lemma~\ref{lem:irreducible-energy} provides a concrete sufficient condition for a strictly positive energy floor in authentic regions, consistent with Assumption~3.1 in the main paper.

\subsubsection{Where diffusion artifacts come from: a three-stage mechanism}
\label{app:three-stage-artifact}

We now analyze how each of the three steps in Appendix~\ref{app:preliminaries-ldm-inpaint} contributes to artifacts.
Throughout, we use the term \emph{artifact} to mean a systematic deviation of local statistics (e.g., $E_{\mathrm{loc}}$) from those induced by physical imaging (Lemma~\ref{lem:irreducible-energy}).

\paragraph{Stage A (VAE compression): high-frequency attenuation.}
A VAE encoder/decoder pair with spatial downsampling factor $f>1$ cannot preserve all high-frequency components at the pixel scale.
This is not merely an implementation detail: the compression implies that fine-scale residuals are partially projected out.

To state this formally, we adopt a standard local linearization of the VAE reconstruction operator.
Let $P := D_\psi\circ E_\phi$ denote the (deterministic) VAE round-trip reconstruction map in image space.

\begin{assumption}[Local smoothing property of VAE reconstruction]
\label{assump:vae-smoothing}
For small perturbations $\delta x$ (e.g., sensor noise), the VAE reconstruction satisfies
\begin{equation}
P(x+\delta x) - P(x) \approx \mathcal{H}(\delta x),
\end{equation}
where $\mathcal{H}$ is a \emph{non-trivial local smoothing operator} (not the identity), i.e., it mixes information within a neighborhood and attenuates pixel-wise independent components.
Concretely, in a linear filter model, $\mathcal{H}$ is a convolution with a kernel $h$ having support size $\ge 2$.
\end{assumption}

\begin{lemma}[VAE reduces the sensor-noise-induced neighbor energy]
\label{lem:vae-reduces-energy}
Under Assumption~\ref{assump:sensor-noise} and Assumption~\ref{assump:vae-smoothing}, there exists a constant $\kappa_{\mathrm{vae}}\in(0,1)$ such that for any adjacent $(p,q)$,
\begin{equation}
\mathbb{E}\big[\|P(I)(p)-P(I)(q)\|_2^2 ~\big|~ S\big]
\;\le\;
\|P(S)(p)-P(S)(q)\|_2^2 + \kappa_{\mathrm{vae}}\cdot \xi_{\mathrm{noise}}.
\label{eq:vae-energy-bound}
\end{equation}
In particular, the noise contribution to local energy is \emph{strictly attenuated} by VAE compression.
\end{lemma}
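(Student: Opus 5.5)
The plan is to linearize the round trip around the clean signal and split the neighbor difference into a signal part and a noise part, as in the proof of Lemma~\ref{lem:irreducible-energy}. By Assumption~\ref{assump:vae-smoothing}, with $\delta x = N$, we have $P(I)\approx P(S)+\mathcal{H}(N)$, where $\mathcal{H}$ is convolution with a kernel $h$. Hence
\[
P(I)(p)-P(I)(q) = \Delta_{PS} + \Delta_{HN},
\]
with $\Delta_{PS}:=P(S)(p)-P(S)(q)$ and $\Delta_{HN}:=\mathcal{H}(N)(p)-\mathcal{H}(N)(q)$. Taking the conditional expectation given $S$, the cross term vanishes because $\mathcal{H}$ is linear and $\mathbb{E}[N]=0$. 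This leaves $\|\Delta_{PS}\|_2^2+\mathbb{E}\|\Delta_{HN}\|_2^2$, so everything reduces to bounding the noise term by $\kappa_{\mathrm{vae}}\cdot 6\sigma^2$ with $\kappa_{\mathrm{vae}}<1$.

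Next I compute the noise term explicitly. Write $q=p+e$ for a lattice unit vector $e$, and assume the kernel acts per channel (a $3\times3$ matrix kernel only changes the bookkeeping by a trace). Then
\[
\Delta_{HN}=\sum_u \bigl(h(u)-h(u-e)\bigr)N(p-u).
\]
Since $N$ is independent across pixels with covariance $\sigma^2 I_3$, this gives
\[
\mathbb{E}\|\Delta_{HN}\|_2^2 = 3\sigma^2\sum_u \bigl(h(u)-h(u-e)\bigr)^2 = 6\sigma^2\bigl(\|h\|_2^2 - R_h(e)\bigr),
\]
where $R_h(e)=\sum_u h(u)h(u-e)$ is the autocorrelation of $h$. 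So I can set $\kappa_{\mathrm{vae}} := \max_e \bigl(\|h\|_2^2 - R_h(e)\bigr)$. For the identity kernel ($h=\delta$) this quantity equals exactly $1$, which recovers Lemma~\ref{lem:irreducible-energy}.

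The main obstacle is showing $\kappa_{\mathrm{vae}}<1$. It is not automatic for an arbitrary kernel: a highpass kernel such as $h=(1,-1)$ amplifies noise. I would therefore make explicit the reading of ``smoothing'' that Assumption~\ref{assump:vae-smoothing} intends. I would take $h$ to be nonnegative, normalized so that $\sum_u h(u)=1$ (DC-preserving, as a reconstruction should be), and with support size at least $2$ along each lattice direction.

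Under these conditions the claim follows in two steps. First, nonnegativity and support size $\ge 2$ give $R_h(e)>0$ for the relevant neighbor directions $e$. Second, nonnegativity with unit sum and support size $\ge 2$ force $\|h\|_2^2<\|h\|_1^2=1$. Together these give $\kappa_{\mathrm{vae}}\le \|h\|_2^2<1$.

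If the intended assumption is weaker, I would instead argue in Fourier space. Since $\mathbb{E}\|\Delta_{HN}\|_2^2 \propto \int |\hat h(\omega)|^2\,|1-e^{i\omega\cdot e}|^2\,d\omega$, the requirement becomes $|\hat h|\le 1$ with strict inequality on a set of positive measure at high frequencies. That is exactly the low-pass interpretation cited in Remark~\ref{rem:spectral_bias}.

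Finally, I would note that the ``$\approx$'' in Assumption~\ref{assump:vae-smoothing} makes the resulting inequality hold up to a second-order remainder $o(\sigma^2)$. This remainder can be absorbed by slightly enlarging $\kappa_{\mathrm{vae}}$ while keeping it below $1$, for $\sigma$ small.
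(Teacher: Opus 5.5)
Your core argument is the paper's own proof made precise: the paper likewise linearizes, splits off the deterministic term $\|P(S)(p)-P(S)(q)\|_2^2$, and treats $\mathcal{H}(N)(p)-\mathcal{H}(N)(q)$ as a combination of i.i.d.\ terms with shifted-kernel-difference coefficients, but it only asserts that this variance is strictly below $\xi_{\mathrm{noise}}$ unless $\mathcal{H}$ is the identity, whereas your formula $6\sigma^2(\|h\|_2^2-R_h(e))$ shows that kernel support $\ge 2$ alone does not guarantee this (e.g.\ $h=(1,-1)$), so your nonnegative, unit-sum reading of ``smoothing'' (which gives $\kappa_{\mathrm{vae}}\le\|h\|_2^2<1$) supplies exactly the hypothesis the paper leaves implicit. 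One caution: drop the closing claim that the linearization error is $o(\sigma^2)$, because the mean of the quadratic remainder is itself of order $\sigma^2$ and generally differs between $p$ and $q$ where the content varies, so its cross term with $P(S)(p)-P(S)(q)$ is of the same order as $\xi_{\mathrm{noise}}$ and cannot be absorbed into $\kappa_{\mathrm{vae}}<1$ by taking $\sigma$ small; instead treat the linear filter model of Assumption~\ref{assump:vae-smoothing} as exact, as the paper does.
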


\begin{proof}
Under the linearized model, the noise component after reconstruction is $\mathcal{H}(N)$.
If $\mathcal{H}$ is a non-trivial local smoother (kernel support $\ge 2$), then $\mathcal{H}(N)$ becomes spatially correlated and its per-pixel variance decreases relative to i.i.d.\ noise (a standard property of averaging filters).
For a convolutional $\mathcal{H}$, the neighbor difference $\mathcal{H}(N)(p)-\mathcal{H}(N)(q)$ is a linear combination of i.i.d.\ terms with coefficients given by differences of shifted kernels; the resulting variance is strictly smaller than that of $N(p)-N(q)$ unless $\mathcal{H}$ is identity.
Thus there exists $\kappa_{\mathrm{vae}}\in(0,1)$ (depending on $h$ and the neighbor offset) such that
$\mathbb{E}\|\mathcal{H}(N)(p)-\mathcal{H}(N)(q)\|_2^2
\le \kappa_{\mathrm{vae}} \mathbb{E}\|N(p)-N(q)\|_2^2
= \kappa_{\mathrm{vae}} \xi_{\mathrm{noise}}$.
Adding the deterministic signal term yields \eqref{eq:vae-energy-bound}.
\end{proof}

\paragraph{Stage B (denoiser/sampler): variance shrinkage and spectral bias.}
Diffusion denoisers are trained under an $\ell_2$ objective in Eq.~(main Eq.~(1)).
Even in classical estimation theory, the $\ell_2$ optimal predictor is a conditional mean (an $L^2$ projection), which \emph{reduces variance} relative to the raw stochastic signal.
Moreover, neural networks are empirically and theoretically known to prioritize low-frequency patterns, which implies slower/poorer fitting of high-frequency stochastic components.

We use the following formal abstraction:

\begin{assumption}[High-frequency variance deficit of diffusion synthesis]
\label{assump:diffusion-deficit}
Let $\hat{x}_{\mathrm{gen}}$ be a region generated by the diffusion sampler and decoded by $D_\psi$.
There exists $\kappa_{\mathrm{diff}}\in(0,1)$ such that for adjacent $(p,q)$ inside the generated region,
\begin{equation}
\mathbb{E}\big[\|\hat{x}_{\mathrm{gen}}(p)-\hat{x}_{\mathrm{gen}}(q)\|_2^2\big]
\le
\mathbb{E}\big[\|P(I)(p)-P(I)(q)\|_2^2\big] - (1-\kappa_{\mathrm{diff}})\xi_{\mathrm{noise}}.
\label{eq:diffusion-deficit}
\end{equation}
Intuitively, the generated region has \emph{less} fine-scale stochastic variation than even the VAE-reconstructed authentic content.
\end{assumption}

\noindent
Assumption~\ref{assump:diffusion-deficit} is supported by: (i) $\ell_2$-trained denoisers behaving like conditional mean estimators on weak-SNR components, and (ii) spectral bias results showing that high-frequency modes are learned slower and remain underrepresented under finite training/sampling.

\paragraph{Stage C (hard latent replacement): seam discontinuity and covariance mismatch.}
Finally, the key inpainting operation \eqref{eq:masked_composite} stitches two latent random fields:
\[
z_{t-1}^{\text{gen}} \quad\text{(model sample)} \qquad\text{and}\qquad
z_{t-1}^{\text{ref}} \quad\text{(noised reference latent)}.
\]
Even if both are plausible latents, they are \emph{not} drawn from the same conditional distribution at the boundary, and hard replacement creates a piece-wise latent manifold.
This violates the smoothness/coherence prior of natural images and yields seam artifacts after decoding.

We formalize this via a boundary-crossing energy lower bound.

\begin{lemma}[Cross-boundary energy is strictly larger]
\label{lem:boundary-cross}
Let $p\in\mathcal{U}$ and $q\in\mathcal{M}$ be adjacent across the boundary.
Assume that (conditionally on coarse content) the preserved pixel $I_{\mathcal{U}}(p)$ and generated pixel $I_{\mathcal{M}}(q)$ have different second-order statistics:
\begin{equation}
\mathbb{E}[I_{\mathcal{U}}(p)]\neq \mathbb{E}[I_{\mathcal{M}}(q)]
\quad\text{or}\quad
\mathrm{Cov}(I_{\mathcal{U}}(p)) \neq \mathrm{Cov}(I_{\mathcal{M}}(q)).
\label{eq:stat-mismatch}
\end{equation}
Then the expected cross-boundary squared difference satisfies
\begin{equation}
\mathbb{E}\big[\|I_{\mathcal{U}}(p)-I_{\mathcal{M}}(q)\|_2^2\big]
\;\ge\;
\mathbb{E}\big[\|I_{\mathcal{U}}(p)-I_{\mathcal{U}}(q')\|_2^2\big]
\;+\; \Delta_{\partial},
\label{eq:boundary-spike-lb}
\end{equation}
for some $\Delta_{\partial}>0$ and any $q'\in\mathcal{N}_p\cap\mathcal{U}$, provided the mismatch in \eqref{eq:stat-mismatch} is non-degenerate.
\end{lemma}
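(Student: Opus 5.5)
The plan is a bias--variance decomposition of each squared difference, followed by a comparison of the two pairs term by term. Write $\mu_{\mathcal{U}}:=\mathbb{E}[I_{\mathcal{U}}(p)]$, $\mu_{\mathcal{M}}:=\mathbb{E}[I_{\mathcal{M}}(q)]$, $\mu'_{\mathcal{U}}:=\mathbb{E}[I_{\mathcal{U}}(q')]$, and let $\Sigma_{\mathcal{U}},\Sigma_{\mathcal{M}}$ denote the corresponding covariances. For any two random vectors $A,B$ one has
\[
\mathbb{E}\|A-B\|_2^2=\|\mathbb{E}A-\mathbb{E}B\|_2^2+\mathrm{tr}\,\mathrm{Cov}(A)+\mathrm{tr}\,\mathrm{Cov}(B)-2\,\mathrm{tr}\,\mathrm{Cov}(A,B).
\]
I will apply this identity to the authentic pair $(p,q')$ and to the cross-boundary pair $(p,q)$. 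Everything is conditional on the coarse content, as the statement says. Each side then splits into a mean-mismatch term, two marginal-variance terms, and a cross-covariance term.

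\textbf{Step 1: the authentic reference pair.} Invoke Assumption~\ref{assump:sensor-noise}. The sensor noise at $p$ and $q'$ is independent, and $S$ is locally smooth, so $\mu_{\mathcal{U}}\approx\mu'_{\mathcal{U}}$ up to a signal-gradient term $\|S(p)-S(q')\|_2^2$. This gives
\[
\mathbb{E}\|I_{\mathcal{U}}(p)-I_{\mathcal{U}}(q')\|_2^2=\|S(p)-S(q')\|_2^2+6\sigma^2,
\]
which is exactly Lemma~\ref{lem:irreducible-energy}. The authentic pair therefore carries only the noise floor plus the local signal gradient.

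\textbf{Step 2: the cross pair.} The two sides of a hard latent seam are produced by different chains: $z^{\text{gen}}$ is driven by fresh $\eta$ and $z_T$, while $z^{\text{ref}}$ is driven by $\epsilon_{t-1}$. I will argue that the generated pixel $q$ and the preserved pixel $p$ are (approximately) uncorrelated beyond the shared coarse content, so the cross-covariance term is negligible. The cross-pair energy is then
\[
\|\mu_{\mathcal{U}}-\mu_{\mathcal{M}}\|_2^2+\mathrm{tr}\,\Sigma_{\mathcal{U}}+\mathrm{tr}\,\Sigma_{\mathcal{M}}.
\]
Subtracting the Step~1 expression, the gap becomes
\[
\Delta_\partial:=\|\mu_{\mathcal{U}}-\mu_{\mathcal{M}}\|_2^2-\|S(p)-S(q')\|_2^2+\big(\mathrm{tr}\,\Sigma_{\mathcal{M}}-3\sigma^2\big).
\]
I then split on the two alternatives in \eqref{eq:stat-mismatch}. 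If the means differ, the squared bias term $\|\mu_{\mathcal{U}}-\mu_{\mathcal{M}}\|_2^2$ is strictly positive. Non-degeneracy is interpreted as this bias exceeding the natural signal gradient plus any variance deficit of the generated side. If instead the covariances differ, the trace difference supplies the gap, or else a nonzero mean shift induced by the decoder seam does.

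\textbf{Main obstacle.} The hardest point is the sign of the variance term. By Assumption~\ref{assump:diffusion-deficit} and Lemma~\ref{lem:vae-reduces-energy}, $\mathrm{tr}\,\Sigma_{\mathcal{M}}$ is typically smaller than $3\sigma^2$. A covariance mismatch alone could therefore lower the cross energy rather than raise it. I would first try to resolve this by making ``non-degenerate'' precise as
\[
\|\mu_{\mathcal{U}}-\mu_{\mathcal{M}}\|_2^2>\|S(p)-S(q')\|_2^2+(1-\kappa_{\mathrm{diff}})\xi_{\mathrm{noise}}/2,
\]
which yields an explicit $\Delta_\partial>0$. The justification is that decoding a piecewise latent with a non-pixel-local $D_\psi$ produces a deterministic seam offset. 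Failing that, I would abandon the independence simplification and argue via the cross-covariance: adjacent authentic pixels share positively correlated signal, and the seam destroys this correlation. That removes the $-2\,\mathrm{tr}\,\mathrm{Cov}$ term and adds a strictly positive amount whenever $\mathrm{Cov}(I(p),I(q'))\succ 0$. Finally, averaging over the neighbors in $\mathcal{N}_p$ transfers the pairwise bound to $E_{\mathrm{loc}}$ near $\partial\mathcal{M}$.
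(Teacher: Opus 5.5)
Your decomposition is the one the paper's proof uses. It writes $\mathbb{E}\|X-Y\|_2^2$ as $\|\mathbb{E}X-\mathbb{E}Y\|_2^2+\mathrm{tr}(\mathrm{Cov}(X)+\mathrm{Cov}(Y)-2\,\mathrm{Cov}(X,Y))$, then argues only qualitatively that the seam weakens $\mathrm{Cov}(X,Y)$ and that a mean mismatch can only add energy. Your ``main obstacle'' is a real hole that the paper passes over. Let $X,Y'$ be i.i.d.\ with covariance $\sigma^2 I_3$, and let $Y$ be independent of $X$ with the same mean and covariance $\tau^2 I_3$, $\tau<\sigma$. Then $\mathbb{E}\|X-Y\|_2^2=3\sigma^2+3\tau^2<6\sigma^2=\mathbb{E}\|X-Y'\|_2^2$. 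So the covariance branch of \eqref{eq:stat-mismatch} cannot give \eqref{eq:boundary-spike-lb}, and ``non-degenerate'' must be read as a quantitative seam-offset condition, as you propose.

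\textbf{The gap is in the threshold you choose.} From your own expression for $\Delta_\partial$, the condition $\|\mu_{\mathcal{U}}-\mu_{\mathcal{M}}\|_2^2>\|S(p)-S(q')\|_2^2+(1-\kappa_{\mathrm{diff}})\xi_{\mathrm{noise}}/2$ only yields $\Delta_\partial>\mathrm{tr}\,\Sigma_{\mathcal{M}}-\kappa_{\mathrm{diff}}\xi_{\mathrm{noise}}/2$. You would therefore need the \emph{lower} bound $\mathrm{tr}\,\Sigma_{\mathcal{M}}\ge\kappa_{\mathrm{diff}}\xi_{\mathrm{noise}}/2$. Assumption~\ref{assump:diffusion-deficit} points the other way: it \emph{upper}-bounds the neighbor-difference energy inside $\mathcal{M}$. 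It is also measured against $P(I)$ rather than $S+N$, and concerns differences rather than $\mathrm{tr}\,\Sigma_{\mathcal{M}}$. Nothing rules out $\mathrm{tr}\,\Sigma_{\mathcal{M}}\approx 0$, and then your condition can leave $\Delta_\partial<0$.

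\textbf{Repair.} Use only $\mathrm{tr}\,\Sigma_{\mathcal{M}}\ge 0$. Under your zero-cross-covariance approximation, $\|\mu_{\mathcal{U}}-\mu_{\mathcal{M}}\|_2^2>\max_{q'\in\mathcal{N}_p\cap\mathcal{U}}\|S(p)-S(q')\|_2^2+\xi_{\mathrm{noise}}/2$ suffices. With no independence assumption at all, $\mathbb{E}\|X-Y\|_2^2\ge\|\mathbb{E}X-\mathbb{E}Y\|_2^2$ shows that an offset exceeding the within-$\mathcal{U}$ baseline $\max_{q'}\|S(p)-S(q')\|_2^2+\xi_{\mathrm{noise}}$ suffices. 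In either case $\Delta_\partial$ is the excess, and the maximum over $q'$ is what lets a single $\Delta_\partial$ serve every $q'$.

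\textbf{Your fallback does not rescue the covariance branch.} In your Step~1 model ($S+N$, conditioned on $S$), Assumption~\ref{assump:sensor-noise}(iii) makes $I(p)$ and $I(q')$ uncorrelated. So there is no within-$\mathcal{U}$ correlation for the seam to destroy. Getting one requires changing the model, e.g.\ to the VAE-smoothed $P(I)$ of Lemma~\ref{lem:vae-reduces-energy}, and that changes your baseline. Even when such correlation exists, losing the $-2\,\mathrm{tr}\,\mathrm{Cov}$ term is a net gain only if it outweighs the variance deficit $\mathrm{tr}\,\mathrm{Cov}(I(q'))-\mathrm{tr}\,\Sigma_{\mathcal{M}}$ plus the local signal gradient. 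That is the same quantitative requirement again, and it is also the step the paper's own argument leaves unjustified.
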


\begin{proof}
Let $X:=I_{\mathcal{U}}(p)$ and $Y:=I_{\mathcal{M}}(q)$.
Then
\[
\mathbb{E}\|X-Y\|_2^2
=
\|\mathbb{E}X-\mathbb{E}Y\|_2^2 + \mathrm{tr}\big(\mathrm{Cov}(X)+\mathrm{Cov}(Y)-2\mathrm{Cov}(X,Y)\big).
\]
Across the boundary, the dependence between $X$ and $Y$ is weaker than within a single coherent field, so $\mathrm{Cov}(X,Y)$ cannot fully cancel the sum of marginal covariances.
Moreover, any mean mismatch contributes the nonnegative term $\|\mathbb{E}X-\mathbb{E}Y\|_2^2$.
In contrast, within $\mathcal{U}$, for $Y':=I_{\mathcal{U}}(q')$, the pair $(X,Y')$ shares matched statistics and stronger local correlation, yielding smaller expected squared differences.
Thus there exists $\Delta_{\partial}>0$ whenever the mismatch \eqref{eq:stat-mismatch} is non-degenerate.
\end{proof}

Lemma~\ref{lem:boundary-cross} captures the \emph{boundary spike} phenomenon; it is further amplified in latent pipelines because latent compositing is not pixel-equivalent.

\subsubsection{Main results: inevitability, inpainting amplification, and local separability}
\label{app:main-results}

We now prove the three target claims.

\paragraph{(1) Diffusion models inevitably introduce artifacts.}
We formalize this as an energy-statistic separation between physically captured images and LDM-generated images.

\begin{theorem}[Unavoidable energy anomaly of latent diffusion synthesis]
\label{thm:inevitable-artifacts}
Assume the authentic imaging process satisfies Assumption~\ref{assump:sensor-noise}.
Consider any edited image $\hat{x}$ produced by the three-stage pipeline in Appendix~\ref{app:preliminaries-ldm-inpaint}.
If (i) the VAE reconstruction attenuates pixel-wise independent noise (Lemma~\ref{lem:vae-reduces-energy}) and
(ii) the diffusion synthesis exhibits a high-frequency variance deficit (Assumption~\ref{assump:diffusion-deficit}),
then there exists $\delta>0$ such that
\begin{equation}
E_{\mathrm{loc}}(\mathcal{M}) \le E_{\mathrm{loc}}(\mathcal{U}) - \delta.
\end{equation}
Equivalently, the generated region necessarily differs from the preserved region in the local neighbor-difference energy, hence \emph{artifacts are unavoidable} under this statistic.
\end{theorem}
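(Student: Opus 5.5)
The plan is to work pairwise first and then average. For each adjacent pair, I will bound the expected squared neighbor difference inside $\mathcal{M}$ from above and inside $\mathcal{U}$ from below. Averaging these bounds over $q\in\mathcal{N}_p$ and then over $p\sim\mathrm{Unif}(\cdot)$ transfers them to $E_{\mathrm{loc}}$ via \eqref{eq:local-energy}--\eqref{eq:set-local-energy}. All expectations are taken conditionally on the clean signal $S$, as in Lemma~\ref{lem:irreducible-energy}. The argument ends with an explicit $\delta$ of the form $c\,\xi_{\mathrm{noise}}$.

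\emph{Lower bound on $\mathcal{U}$.} Pixels in $\mathcal{U}$ away from the boundary are, up to the latent-compositing effect, reconstructions of authentic content. I will therefore treat two cases.
\begin{itemize}
\item If the preserved region keeps the raw captured pixels, as under pixel-space pasting, Lemma~\ref{lem:irreducible-energy} gives $\mathbb{E}\|I(p)-I(q)\|^2\ge \|S(p)-S(q)\|^2+\xi_{\mathrm{noise}}$ directly.
\item If $\mathcal{U}$ also passes through $P=D_\psi\circ E_\phi$, the authentic pair energy is $\mathbb{E}\|P(I)(p)-P(I)(q)\|^2$. I will keep this quantity as the reference term and not lower-bound it further.
\end{itemize}
Boundary pairs only add energy, by Lemma~\ref{lem:boundary-cross}. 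So I may either discard them (they are nonnegative) or restrict $\mathcal{U}$ to its interior without weakening the bound.

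\emph{Upper bound on $\mathcal{M}$.} For adjacent $(p,q)$ inside $\mathcal{M}$, Assumption~\ref{assump:diffusion-deficit} gives
\begin{equation*}
\mathbb{E}\|\hat x_{\mathrm{gen}}(p)-\hat x_{\mathrm{gen}}(q)\|^2\le \mathbb{E}\|P(I)(p)-P(I)(q)\|^2-(1-\kappa_{\mathrm{diff}})\xi_{\mathrm{noise}}.
\end{equation*}
I then apply Lemma~\ref{lem:vae-reduces-energy} to the first term on the right, which bounds it by $\|P(S)(p)-P(S)(q)\|^2+\kappa_{\mathrm{vae}}\xi_{\mathrm{noise}}$. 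Combining the two steps gives a pairwise bound of $\|P(S)(p)-P(S)(q)\|^2-(1-\kappa_{\mathrm{vae}}-\kappa_{\mathrm{diff}})\xi_{\mathrm{noise}}$.

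\emph{Comparing the two sides.} In the second case above, the reference term is the same object on both sides, so the pairwise gap is exactly $(1-\kappa_{\mathrm{diff}})\xi_{\mathrm{noise}}$. Averaging, with boundary pairs in $\mathcal{U}$ dropped or left as positive extras, then yields $\delta=(1-\kappa_{\mathrm{diff}})\xi_{\mathrm{noise}}>0$.

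\emph{Main obstacle.} The difficulty is that $\mathcal{U}$ and $\mathcal{M}$ are different sets of pixels with different clean content. The signal terms $\|P(S)(p)-P(S)(q)\|^2$ averaged over $\mathcal{M}$ and over $\mathcal{U}$ need not agree, and a highly textured generated object could in principle exceed a flat authentic background. I will handle this with an explicit content-matching hypothesis: the average clean-signal gradient energy over $\mathcal{M}$ does not exceed that over $\mathcal{U}$, up to a slack smaller than $(1-\kappa_{\mathrm{diff}})\xi_{\mathrm{noise}}$. This is the reading implicit in Assumption~\ref{asm:energy_gap}, which compares regions at comparable content, and I would note it as an assumption. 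With that hypothesis the signal terms cancel in the comparison, leaving $\delta=(1-\kappa_{\mathrm{diff}})\xi_{\mathrm{noise}}$ minus the allowed slack, which is still positive. Finally, the ``equivalently'' clause is immediate: a strict gap in $E_{\mathrm{loc}}$ means the statistic separates the two regions.
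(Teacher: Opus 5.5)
Your argument follows the paper's proof: anchor the decoded output on $\mathcal{U}$ to the VAE reconstruction $P(I)$, apply Assumption~\ref{assump:diffusion-deficit} pairwise inside $\mathcal{M}$, and average to get $\delta=(1-\kappa_{\mathrm{diff}})\xi_{\mathrm{noise}}$. Where you deviate, you are more careful than the written proof.

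\begin{itemize}
\item The paper says adjacent differences in $\mathcal{U}$ ``retain'' noise energy by Lemma~\ref{lem:vae-reduces-energy}, but that lemma is only an upper bound. You rightly keep $\mathbb{E}\|P(I)(p)-P(I)(q)\|_2^2$ as the reference instead.
\item The paper's ``averaging over pixels in $\mathcal{U}$ and $\mathcal{M}$'' silently compares this reference at $\mathcal{M}$'s locations with the one at $\mathcal{U}$'s locations. That is exactly the content-matching hypothesis you make explicit.
\item For the same reason, your earlier sentence that the reference term ``is the same object on both sides'' is false, and should be replaced by the hypothesis.
\item The hypothesis concerns the \emph{original} content under $\mathcal{M}$, e.g.\ a removed patterned object against a flat $\mathcal{U}$. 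A textured generated object over flat original content is already ruled out by Assumption~\ref{assump:diffusion-deficit}.
\item The detour through Lemma~\ref{lem:vae-reduces-energy} giving the $(1-\kappa_{\mathrm{vae}}-\kappa_{\mathrm{diff}})$ bound, and the unfinished pixel-space case, are never used and can be dropped.
\end{itemize}

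There is a genuine gap, present equally in the paper's proof, in the averaging on the $\mathcal{M}$ side. By definition $E_{\mathrm{loc}}(\mathcal{M})$ averages $E_{\mathrm{loc}}(p)$ over all $p\in\mathcal{M}$. For $p$ in the boundary layer of $\mathcal{M}$, the window $\mathcal{N}_p$ contains pixels of $\mathcal{U}$. Assumption~\ref{assump:diffusion-deficit} says nothing about such pairs. Lemma~\ref{lem:boundary-cross}, which you cite, says they are \emph{larger} than within-$\mathcal{U}$ pairs by $\Delta_{\partial}$. So your remark that boundary pairs ``only add energy'' helps on $\mathcal{U}$ but works against you on $\mathcal{M}$. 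For a thin or small edit, where most of $\mathcal{M}$ touches $\mathcal{U}$, the stated inequality can genuinely fail under a strong seam.

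You have two ways to repair this:
\begin{itemize}
\item State the result for the interior $\mathcal{M}^{\circ}:=\{p\in\mathcal{M}:\mathcal{N}_p\subseteq\mathcal{M}\}$, which matches the main text's ``generated interior'' wording.
\item Assume $E_{\mathrm{loc}}(p)\le C$ on $\mathcal{M}\setminus\mathcal{M}^{\circ}$ together with $\frac{|\mathcal{M}\setminus\mathcal{M}^{\circ}|}{|\mathcal{M}|}\bigl(C-E_{\mathrm{loc}}(\mathcal{U})+\delta\bigr)<\delta$. This leaves a positive, smaller gap.
\end{itemize}

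On the $\mathcal{U}$ side, ``they are nonnegative'' is not a valid reason to drop terms from an average. Argue pixelwise instead: by Lemma~\ref{lem:boundary-cross}, each cross pair at a boundary pixel of $\mathcal{U}$ dominates that pixel's within-$\mathcal{U}$ pairs. The lower bound therefore holds at every $p\in\mathcal{U}$ before averaging.
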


\begin{proof}
Inside the preserved region $\mathcal{U}$, the masked replacement \eqref{eq:masked_composite} forces $z_{t}^{\mathcal{U}}$ to follow the forward noised reference latent at every step; thus the decoded output in $\mathcal{U}$ is effectively anchored to the VAE reconstruction of the original content, up to small decoding imperfections.
Therefore, by Lemma~\ref{lem:vae-reduces-energy}, adjacent differences in $\mathcal{U}$ retain a nontrivial fraction of the physical noise-induced energy.

Inside the generated region $\mathcal{M}$, the latent is produced by the learned reverse process and then decoded.
By Assumption~\ref{assump:diffusion-deficit}, the expected adjacent squared differences in $\mathcal{M}$ are strictly smaller than those in the VAE-reconstructed authentic content by a margin proportional to $\xi_{\mathrm{noise}}$.
Averaging over neighbors and over pixels in $\mathcal{U}$ and $\mathcal{M}$ yields the claimed strict gap with $\delta := (1-\kappa_{\mathrm{diff}})\xi_{\mathrm{noise}}$ (up to constant factors absorbed by averaging).
\end{proof}

\paragraph{(2) Inpainting causes larger artifacts in the edited region and a boundary spike.}
We now give a complete proof of the energy gap and boundary spike stated as Theorem~3.3 in the main paper, with explicit dependence on the three-stage mechanism.

\begin{theorem}[Energy Gap \& Boundary Spike (full proof)]
\label{thm:energy-gap-boundary-spike}
Let $\hat{x}$ be produced by masked latent diffusion inpainting (Algorithm~\ref{alg:masked-ldm}).
Assume Assumption~\ref{assump:sensor-noise}, Assumption~\ref{assump:vae-smoothing}, and Assumption~\ref{assump:diffusion-deficit}.
Then:
\begin{enumerate}
\item \textbf{Interior Gap:} $E_{\mathrm{loc}}(\mathcal{U}) > E_{\mathrm{loc}}(\mathcal{M})$.
\item \textbf{Boundary Spike:} Let $\partial\mathcal{M}:=\{p\in\Omega: \exists q\in\mathcal{N}_p \text{ s.t. } m(p)\neq m(q)\}$.
Then $E_{\mathrm{loc}}(\partial\mathcal{M}) \gg E_{\mathrm{loc}}(\mathcal{M})$ in the sense that
\begin{equation}
E_{\mathrm{loc}}(\partial\mathcal{M}) \ge E_{\mathrm{loc}}(\mathcal{U}) + \Delta_{\partial},
\end{equation}
for some $\Delta_{\partial}>0$ determined by the cross-boundary statistical mismatch.
\end{enumerate}
\end{theorem}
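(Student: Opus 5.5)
The argument splits along the two claims. Both reduce to results already established in this appendix: the interior gap follows from Theorem~\ref{thm:inevitable-artifacts}, and the boundary spike from Lemma~\ref{lem:boundary-cross}. Throughout, $E_{\mathrm{loc}}$ is the averaged squared neighbor difference \eqref{eq:local-energy}, and all expectations are taken conditionally on the clean signal $S$ as in Lemma~\ref{lem:irreducible-energy}.

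\textbf{Interior gap.} I partition each region into its interior and its boundary layer: $\mathcal{M}^\circ:=\mathcal{M}\setminus\partial\mathcal{M}$ and $\mathcal{U}^\circ:=\mathcal{U}\setminus\partial\mathcal{M}$. For $p\in\mathcal{U}^\circ$, every neighbor lies in $\mathcal{U}$. The hard replacement \eqref{eq:masked_composite} anchors the latent there to $z^{\mathrm{ref}}$, so the decoded pixels coincide with $P(I)$ up to decoding error. By Lemma~\ref{lem:vae-reduces-energy}, the pairwise energy retains the signal term $\|P(S)(p)-P(S)(q)\|^2$ plus a positive fraction of $\xi_{\mathrm{noise}}$; a matching lower bound should follow from the same linearization, since $\mathcal{H}$ is non-trivial but nonzero. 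For $p\in\mathcal{M}^\circ$, every neighbor lies in $\mathcal{M}$, and Assumption~\ref{assump:diffusion-deficit} gives a pairwise energy at least $(1-\kappa_{\mathrm{diff}})\xi_{\mathrm{noise}}$ below the corresponding VAE-reconstructed authentic value. Averaging over $q\in\mathcal{N}_p$ and then over $p$ yields $E_{\mathrm{loc}}(\mathcal{U})-E_{\mathrm{loc}}(\mathcal{M})\ge(1-\kappa_{\mathrm{diff}})\xi_{\mathrm{noise}}>0$, which is exactly Theorem~\ref{thm:inevitable-artifacts} with $\delta=(1-\kappa_{\mathrm{diff}})\xi_{\mathrm{noise}}$. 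I would cite that theorem rather than redo the computation.

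Two details need care here. First, the boundary layer $\partial\mathcal{M}\cap\mathcal{M}$ must not drag the average of $\mathcal{M}$ upward. I would either state the gap for interior sets only (the theorem says ``generated interior''), or assume $|\partial\mathcal{M}|/|\mathcal{M}|$ is small enough that the excess computed in the next step does not close the gap $\delta$. Second, the comparison between $\mathcal{U}$ and $\mathcal{M}$ is only meaningful if both regions carry comparable clean-signal energy. I would make this explicit as a content-matching hypothesis: the expected $\|P(S)(p)-P(S)(q)\|^2$ is the same in both regions, which is implicit in Assumption~\ref{assump:diffusion-deficit}.

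\textbf{Boundary spike.} For $p\in\partial\mathcal{M}$, split $\mathcal{N}_p$ into same-side and cross-boundary neighbors. Same-side terms are bounded below by the interior energy of that side. Each cross term, by Lemma~\ref{lem:boundary-cross}, exceeds the corresponding within-$\mathcal{U}$ difference by $\Delta_\partial$. By definition of $\partial\mathcal{M}$, each boundary pixel has at least one cross neighbor, so averaging gives
\[
E_{\mathrm{loc}}(p)\ \ge\ E^{\mathcal{U}}_{\mathrm{loc}}+\frac{\Delta_\partial}{|\mathcal{N}_p|},
\]
where $E^{\mathcal{U}}_{\mathrm{loc}}$ denotes the within-$\mathcal{U}$ neighbor energy. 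Averaging over $p\in\partial\mathcal{M}$ gives $E_{\mathrm{loc}}(\partial\mathcal{M})\ge E_{\mathrm{loc}}(\mathcal{U})+\Delta_\partial'$ with $\Delta_\partial':=\Delta_\partial/\max_p|\mathcal{N}_p|$, absorbed by renaming. Combining this with the interior gap gives $E_{\mathrm{loc}}(\partial\mathcal{M})-E_{\mathrm{loc}}(\mathcal{M})\ge\delta+\Delta_\partial$, which is the quantitative meaning of ``$\gg$''.

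\textbf{Main obstacle.} The main difficulty is the same-side terms for boundary pixels lying on the $\mathcal{M}$ side. Their $\mathcal{M}$-neighbors contribute only the reduced generated-region energy, not the $\mathcal{U}$ level, so the naive bound above fails for them. My first attempt would be to strengthen the use of Lemma~\ref{lem:boundary-cross}: the mean-mismatch term $\|\mathbb{E}X-\mathbb{E}Y\|^2$ created by the latent seam (the decoder is non-local, so the mismatch extends over the receptive field) should be large compared with the deficit $\delta$. This requires an explicit hypothesis $\Delta_\partial\ge|\mathcal{N}_p|\,\delta$. Failing that, I would state the bound against $E_{\mathrm{loc}}(\mathcal{M})$ directly, namely $E_{\mathrm{loc}}(\partial\mathcal{M})\ge E_{\mathrm{loc}}(\mathcal{M})+\Delta_\partial/|\mathcal{N}_p|$. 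That version follows without this issue, because every term is then compared with the lower $\mathcal{M}$ level.
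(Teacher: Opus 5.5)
Your decomposition is the paper's own. For the interior gap you anchor $\mathcal{U}$ to $P(I)$ and apply Assumption~\ref{assump:diffusion-deficit} pairwise. For the spike you apply Lemma~\ref{lem:boundary-cross} to the cross-boundary neighbor(s) of each $p\in\partial\mathcal{M}$ and then average.

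The obstacles you flag are real, and the paper does not get past them. Its interior-gap step averages only pairs ``fully inside'' each region. It therefore silently uses both the interior convention and content matching. Content matching is \emph{not} implicit in Assumption~\ref{assump:diffusion-deficit}: that assumption compares generated and reconstructed content at the same pixel pair inside $\mathcal{M}$ and says nothing about $\mathcal{U}$, so it is an extra hypothesis. For the spike, the paper asserts that every boundary pixel exceeds the within-$\mathcal{U}$ baseline by $\Delta_{\partial}/|\mathcal{N}_p|$. That is exactly the naive bound you show fails on the $\mathcal{M}$ side. The paper then concludes $E_{\mathrm{loc}}(\partial\mathcal{M})\ge E_{\mathrm{loc}}(\mathcal{M})+\Delta_{\partial}$, which is your fallback, not the $\mathcal{U}$-form in the statement. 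So neither argument derives $E_{\mathrm{loc}}(\partial\mathcal{M})\ge E_{\mathrm{loc}}(\mathcal{U})+\Delta_{\partial}$ from the listed assumptions.

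Two of your repairs need correcting.

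\textbf{The extra hypothesis.} In $\Delta_{\partial}\ge|\mathcal{N}_p|\,\delta$, the quantity $\delta$ must be an \emph{upper} bound on the energy deficit of generated pairs. It cannot be the lower bound $(1-\kappa_{\mathrm{diff}})\xi_{\mathrm{noise}}$ you obtained for the interior gap. Assumption~\ref{assump:diffusion-deficit} bounds generated-pair energy only from above, so it gives no lower bound on the same-side $\mathcal{M}$ terms you need to control. With the upper-bound reading, a single cross neighbor is the worst case, and $\Delta_{\partial}>(|\mathcal{N}_p|-1)\,\delta$ already suffices.

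\textbf{The fallback.} It is not issue-free. It holds against $E_{\mathrm{loc}}(\mathcal{M}\setminus\partial\mathcal{M})$, not against $E_{\mathrm{loc}}(\mathcal{M})$ averaged over all of $\mathcal{M}$. Take $\mathcal{M}$ to be a long one-pixel-wide strip (ignore its two ends) with 4-neighborhoods. Let within-$\mathcal{U}$, within-$\mathcal{M}$ and cross pairs have energies $e$, $e-\delta$ and $e+\Delta_{\partial}$; this is consistent with Assumption~\ref{assump:diffusion-deficit} and Lemma~\ref{lem:boundary-cross}. Every pixel of $\mathcal{M}$ is then a boundary pixel with energy $e+(\Delta_{\partial}-\delta)/2$. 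The two adjacent rows of $\mathcal{U}$ have energy $e+\Delta_{\partial}/4$. Hence $E_{\mathrm{loc}}(\partial\mathcal{M})-E_{\mathrm{loc}}(\mathcal{M})=(2\delta-\Delta_{\partial})/6$, which is negative once $\Delta_{\partial}>2\delta$.

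This example also breaks the paper's final inequality. In the same regime, with $|\mathcal{U}|\ge 2|\mathcal{M}|$, it gives $E_{\mathrm{loc}}(\mathcal{M})>E_{\mathrm{loc}}(\mathcal{U})$, which confirms your boundary-layer worry for the interior gap. A thickness or interior restriction therefore has to be part of the statement itself, not just of the proof.
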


\begin{proof}
\textbf{(Interior gap).}
Fix an adjacent pair $(p,q)$ fully inside $\mathcal{U}$.
Because $z_{t-1}$ is replaced by $z_{t-1}^{\text{ref}}$ on $\mathcal{U}$ for every $t$, the decoded output on $\mathcal{U}$ is anchored to $P(I)$, where $I$ is the authentic image content.
Thus, for $(p,q)\subset \mathcal{U}$,
\[
\mathbb{E}\|\hat{x}(p)-\hat{x}(q)\|_2^2
\approx
\mathbb{E}\|P(I)(p)-P(I)(q)\|_2^2.
\]
Now fix an adjacent pair $(p,q)$ fully inside $\mathcal{M}$.
By Assumption~\ref{assump:diffusion-deficit},
\[
\mathbb{E}\|\hat{x}(p)-\hat{x}(q)\|_2^2
\le
\mathbb{E}\|P(I)(p)-P(I)(q)\|_2^2 - (1-\kappa_{\mathrm{diff}})\xi_{\mathrm{noise}}.
\]
Averaging over neighbors and pixels yields
$E_{\mathrm{loc}}(\mathcal{M}) \le E_{\mathrm{loc}}(\mathcal{U}) - \delta$ for $\delta>0$, hence $E_{\mathrm{loc}}(\mathcal{U})>E_{\mathrm{loc}}(\mathcal{M})$.

\textbf{(Boundary spike).}
Consider a boundary pixel $p\in\partial\mathcal{M}$.
By definition, at least one neighbor $q\in\mathcal{N}_p$ lies on the opposite side of the mask.
Such neighbor pairs are cross-boundary pairs of the form treated in Lemma~\ref{lem:boundary-cross}.
Under non-degenerate mismatch of statistics across the stitched fields (which is induced by the fact that one side is anchored to $z^{\text{ref}}$ and the other is sampled from $z^{\text{gen}}$ and then decoded), Lemma~\ref{lem:boundary-cross} implies a strict increase in expected squared differences for at least one neighbor term in the sum \eqref{eq:local-energy}.
Therefore, $E_{\mathrm{loc}}(p)$ is larger than the within-$\mathcal{U}$ baseline by at least $\Delta_{\partial}/|\mathcal{N}_p|$ in expectation, and averaging over $p\in\partial\mathcal{M}$ yields
$E_{\mathrm{loc}}(\partial\mathcal{M}) \ge E_{\mathrm{loc}}(\mathcal{M})+\Delta_{\partial}$ (absorbing neighborhood constants into $\Delta_{\partial}$).
\end{proof}

\paragraph{(3) Local adjacency differences yield a separable artifact feature (LAD).}
Theorem~\ref{thm:energy-gap-boundary-spike} states that inpainting creates a characteristic topology:
a low-energy hole surrounded by a high-energy ring, against a higher-energy background.
We now show that comparing \emph{adjacent pixel differences} is sufficient to expose this topology.

Recall the LAD operator (Definition~3.4 in the main paper):
\begin{equation}
L(p)
:= \frac{1}{|\mathcal{N}_p|}\sum_{q\in\mathcal{N}_p}
\tanh\!\Big(\frac{\|I(p)-I(q)\|_2^2}{\tau^2}\Big),
\label{eq:lad}
\end{equation}
where $\tau>0$ is a temperature.
Let $\Psi(u):=\tanh(u/\tau^2)\in(0,1)$.

\begin{lemma}[LAD is a robust monotone transform of local energy]
\label{lem:lad-monotone}
For any $p$, $L(p)$ is a bounded, monotone increasing functional of the set $\{\|I(p)-I(q)\|_2^2: q\in\mathcal{N}_p\}$.
Moreover, for any nonnegative random variable $X$, $\Psi$ is concave on $X\ge 0$ and satisfies
\begin{equation}
\mathbb{E}[\Psi(X)] \le \Psi(\mathbb{E}[X]).
\label{eq:jensen}
\end{equation}
\end{lemma}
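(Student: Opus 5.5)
The plan is to treat the lemma as a statement about the scalar map $\Psi(u)=\tanh(u/\tau^2)$ composed with the squared differences, and to prove its claims in sequence: range, monotonicity, concavity, then Jensen.

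\textbf{Boundedness.} For each $q\in\mathcal{N}_p$ the argument $u_q:=\|I(p)-I(q)\|_2^2$ is nonnegative. Hence $\Psi(u_q)\in[0,1)$, with equality at $0$ only when $u_q=0$; this matches the stated range $(0,1)$ up to that endpoint, which I will note. The average of finitely many such terms lies in the same interval. So $L(p)$ is bounded, and the bound is uniform in $I$ and in the contrast of the image.

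\textbf{Monotonicity.} The derivative is $\Psi'(u)=\tau^{-2}\,\mathrm{sech}^2(u/\tau^2)>0$, so $\Psi$ is strictly increasing on $[0,\infty)$. Because $L(p)=|\mathcal{N}_p|^{-1}\sum_q\Psi(u_q)$ is a positively weighted sum of $\Psi(u_q)$, increasing any single $u_q$ with the others held fixed strictly increases $L(p)$. This gives monotonicity in the componentwise order on the multiset $\{u_q\}$. It also gives the link to local energy: if all $u_q$ increase, which covers the situation of Theorem~\ref{thm:energy-gap-boundary-spike}, the LAD value increases.

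\textbf{Concavity.} The second derivative is $\Psi''(u)=-2\tau^{-4}\,\mathrm{sech}^2(u/\tau^2)\tanh(u/\tau^2)$, which is $\le 0$ for $u\ge 0$ since $\tanh\ge 0$ there. So $\Psi$ is concave on $[0,\infty)$. For a nonnegative random variable $X$ with finite mean, Jensen's inequality for concave functions on the convex set $[0,\infty)$, which contains the support of $X$, yields $\mathbb{E}[\Psi(X)]\le\Psi(\mathbb{E}[X])$. Integrability is automatic because $\Psi$ is bounded. If $\mathbb{E}[X]=\infty$, the right side is read as $\lim\Psi=1$, and the inequality holds trivially by boundedness.

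\textbf{Main obstacle.} The only subtle point is domain bookkeeping. $\tanh$ is concave only on $[0,\infty)$, not on all of $\mathbb{R}$, so the argument must use $X\ge 0$, which holds because $X$ is a squared norm. With that in place the proof is elementary. I would close by remarking that Jensen's inequality explains the robustness of LAD: large semantic edges are saturated, so the expected LAD underweights outliers relative to raw $E_{\mathrm{loc}}$.
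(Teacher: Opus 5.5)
Your proof is correct and follows the same route as the paper. You get boundedness and monotonicity from the range and positive derivative of $\tanh$, concavity from the sign of the second derivative on $[0,\infty)$, and then apply Jensen's inequality. You are somewhat more careful than the paper: you keep the $\tau^{-2}$ and $\tau^{-4}$ chain-rule factors, note that $\Psi(0)=0$ so the range is $[0,1)$ rather than $(0,1)$, and check integrability.
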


\begin{proof}
Monotonicity and boundedness are immediate from $\tanh$ being monotone and bounded in $(0,1)$ for positive inputs.
Concavity of $\tanh$ on $\mathbb{R}_{\ge 0}$ is given by $\tanh''(u) = -2\tanh(u)\,\mathrm{sech}^2(u) < 0$ for $u>0$.
Jensen's inequality yields \eqref{eq:jensen}.
\end{proof}

\begin{theorem}[Artifact separability in LAD space]
\label{thm:lad-separability}
Assume the conditions of Theorem~\ref{thm:energy-gap-boundary-spike}.
Then there exist constants $0<\mu_{\mathcal{M}}<\mu_{\mathcal{U}}<\mu_{\partial}$ such that
\begin{equation}
\mathbb{E}[L(p)\mid p\in\mathcal{M}] \le \mu_{\mathcal{M}},\qquad
\mathbb{E}[L(p)\mid p\in\mathcal{U}] \ge \mu_{\mathcal{U}},\qquad
\mathbb{E}[L(p)\mid p\in\partial\mathcal{M}] \ge \mu_{\partial}.
\label{eq:lad-ordering}
\end{equation}
Furthermore, because $L(p)$ averages bounded terms, it concentrates around its expectation:
for any $\epsilon>0$,
\begin{equation}
\mathbb{P}\big(|L(p)-\mathbb{E}L(p)|\ge \epsilon\big)
\le 2\exp\big(-2|\mathcal{N}_p|\epsilon^2\big).
\label{eq:hoeffding}
\end{equation}
Hence, a simple threshold on $L(p)$ can separate $\mathcal{M}$ and $\mathcal{U}$ with high probability, and pixels on $\partial\mathcal{M}$ saturate to high LAD responses.
\end{theorem}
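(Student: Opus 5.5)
The plan is to transfer the energy ordering of Theorem~\ref{thm:energy-gap-boundary-spike} through the bounded monotone map $\Psi(u)=\tanh(u/\tau^2)$, and then add a concentration step. Throughout, write $D_{pq}:=\|I(p)-I(q)\|_2^2$, so that $L(p)=\frac{1}{|\mathcal{N}_p|}\sum_{q}\Psi(D_{pq})$.

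\textbf{Upper bound on $\mathcal{M}$.} For $p\in\mathcal{M}$, Lemma~\ref{lem:lad-monotone} (Jensen for the concave $\Psi$) gives $\mathbb{E}[L(p)]\le \frac{1}{|\mathcal{N}_p|}\sum_q \Psi(\mathbb{E}D_{pq})$. Applying Jensen once more over the neighbor and pixel average yields $\mathbb{E}[L\mid\mathcal{M}]\le \Psi(E_{\mathrm{loc}}(\mathcal{M}))=:\mu_{\mathcal{M}}$.

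\textbf{Lower bounds on $\mathcal{U}$ and $\partial\mathcal{M}$.} Jensen points the wrong way here, so I would use a reverse inequality. The first option is the chord bound for concave $\Psi$ on a bounded range $[0,B]$: $\Psi(u)\ge \frac{\Psi(B)}{B}\,u$. This gives $\mathbb{E}[L\mid\mathcal{U}]\ge \frac{\Psi(B)}{B}E_{\mathrm{loc}}(\mathcal{U})$, and similarly for $\partial\mathcal{M}$ with $E_{\mathrm{loc}}(\partial\mathcal{M})\ge E_{\mathrm{loc}}(\mathcal{U})+\Delta_\partial$. Pixel values are bounded, so $B=12$ for $[0,1]^3$ intensities. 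The second option is a small-$\tau$ argument: for $u$ of order $\xi_{\mathrm{noise}}\gg\tau^2$, $\Psi(u)$ is near $1$. In that case $L$ essentially counts neighbors whose difference exceeds scale $\tau$, so a Paley--Zygmund type bound on $\mathbb{P}(D_{pq}\ge \tau^2)$ can be used. This probability is controlled below by the noise floor $\xi_{\mathrm{noise}}$ from Lemma~\ref{lem:irreducible-energy}, together with the VAE factor from Lemma~\ref{lem:vae-reduces-energy}.

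\textbf{Main obstacle.} The hard step is making the three constants strictly ordered, $\mu_{\mathcal{M}}<\mu_{\mathcal{U}}<\mu_\partial$. The linear chord lower bound loses a factor $\Psi(B)/(B\,\Psi'(0))$ relative to the Jensen upper bound. Strict ordering therefore requires that the energy gap $\delta=(1-\kappa_{\mathrm{diff}})\xi_{\mathrm{noise}}$ and the spike $\Delta_\partial$ dominate this curvature loss. I would handle this by operating in the regime where $D_{pq}/\tau^2$ is small, so $\Psi$ is nearly linear. Then $|\Psi(u)-u/\tau^2|\le u^3/(3\tau^6)$, and the curvature error is bounded by third moments. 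The strict ordering then holds provided these remainders are smaller than $\delta/\tau^2$ and $\Delta_\partial/\tau^2$. I would state this as an explicit side condition on $\tau$, which is where the argument is least clean.

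\textbf{Concentration.} Each summand $\Psi(D_{pq})$ lies in $[0,1]$. Treating the neighbor terms as independent conditional on the clean signal $S$ (justified for the noise parts by Assumption~\ref{assump:sensor-noise}), Hoeffding's inequality gives \eqref{eq:hoeffding}. Strictly, neighbor terms share the center pixel $I(p)$, so independence fails. To fix this, I would condition on $I(p)$, under which the $D_{pq}$ depend on distinct independent $N(q)$, apply Hoeffding conditionally, and then integrate out. With the bound in hand, choosing a threshold midway between $\mu_{\mathcal{M}}$ and $\mu_{\mathcal{U}}$ gives error probability at most $2\exp(-|\mathcal{N}_p|(\mu_{\mathcal{U}}-\mu_{\mathcal{M}})^2/2)$ on each side. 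The saturation claim for $\partial\mathcal{M}$ follows from $\mu_\partial$ being the largest constant.
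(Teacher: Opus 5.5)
Your proposal does not establish the statement as written. The individual tools are correct: the Jensen bound $\mathbb{E}[L\mid\mathcal{M}]\le\Psi(E_{\mathrm{loc}}(\mathcal{M}))$, the chord bound, and $x-x^3/3\le\tanh x\le x$. The problem is that the strict ordering comes out only under a side condition that is not among the hypotheses and is very restrictive. (Minor: for intensities in $[0,1]^3$ one has $B=3$, not $12$. Either way, at $\tau=0.004$ the chord slope is roughly $5\times10^{-6}$ times $\Psi'(0)=\tau^{-2}$, so that route is dead, as you note.) Your Taylor lower bound is $E_{\mathrm{loc}}/\tau^2-\overline{D^3}/(3\tau^6)$, with $\overline{D^3}$ the region-averaged third moment of $D_{pq}$. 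Since $\overline{D^3}\ge E_{\mathrm{loc}}^3$ by convexity, this bound is positive only if $E_{\mathrm{loc}}<\sqrt{3}\,\tau^2$. Your side condition therefore confines $\mathcal{U}$ and the boundary spike to the near-linear range of $\tanh$: at $\tau=0.004$, an RMS neighbour difference below about $1.3/255$. That is the opposite of the saturating regime for which LAD is designed, and in which the final ``saturation'' claim is meant.

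The paper's proof closes this step with a different idea. It reads Theorem~\ref{thm:energy-gap-boundary-spike} as a \emph{distributional} shift: adjacent squared differences in $\mathcal{M}$ are shifted to smaller values than in $\mathcal{U}$, and cross-boundary ones to larger values. It then uses only that $\Psi$ is increasing. Made precise, this is first-order stochastic dominance. If $X\preceq_{\mathrm{st}}Y$ and the laws differ, then $\mathbb{E}\Psi(X)<\mathbb{E}\Psi(Y)$ for every $\tau>0$, and averaging over neighbours and pixels preserves this. No Jensen, Taylor expansion, or condition on $\tau$ is needed.

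Your diagnosis that the literal first-moment inequalities cannot be pushed through a bounded concave $\Psi$ is correct. Take $X=1$ with probability $1/10$ and $0$ otherwise, $Y\equiv 1/20$, and $\tau=0.004$. Then $\mathbb{E}X>\mathbb{E}Y$, but $\mathbb{E}\Psi(X)\le 0.1<\Psi(1/20)\approx 1$. So some distributional strengthening is unavoidable. The missing idea is to impose it as a stochastic-order hypothesis, which the paper's argument does tacitly, rather than to linearize $\Psi$.

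The concentration step does not deliver \eqref{eq:hoeffding} either. Conditioning on $I(p)$ and applying Hoeffding gives concentration of $L(p)$ around $\mathbb{E}[L(p)\mid I(p)]$, not around $\mathbb{E}L(p)$. The centre-pixel noise moves all $|\mathcal{N}_p|$ terms together; in a flat region $\mathbb{E}[L(p)\mid I(p)]=h(I(p))$ for a single function $h$. This random centring does not shrink as $|\mathcal{N}_p|$ grows, and integrating out does not remove it.

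Moreover, the independence you invoke is that of the captured image in Assumption~\ref{assump:sensor-noise}, while $L$ is evaluated on the edited $\hat{x}$. On $\mathcal{U}$ the output is anchored to $P(I)$, whose noise $\mathcal{H}(N)$ is spatially correlated under Assumption~\ref{assump:vae-smoothing}. On $\mathcal{M}$ no independence is assumed at all. The paper's proof does not resolve this either; it cites Hoeffding and appeals loosely to martingale variants under mild dependence. The bound as stated therefore requires an explicit independence or weak-dependence hypothesis on the terms $\Psi(\|I(p)-I(q)\|_2^2)$, $q\in\mathcal{N}_p$, and your conditioning argument does not substitute for it. Your midpoint-threshold step also needs the ordering of $\mathbb{E}L(p)$ pixel by pixel, whereas \eqref{eq:lad-ordering} only bounds region averages.
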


\begin{proof}
By Theorem~\ref{thm:energy-gap-boundary-spike}, the distribution of adjacent squared differences inside $\mathcal{M}$ is shifted to smaller values than inside $\mathcal{U}$, while cross-boundary differences are shifted to larger values.
Since $\Psi$ is monotone increasing, these shifts imply an ordering of expectations of $\Psi(\|I(p)-I(q)\|_2^2)$ across regions, and averaging over neighbors preserves the ordering, yielding \eqref{eq:lad-ordering} for some constants $(\mu_{\mathcal{M}},\mu_{\mathcal{U}},\mu_{\partial})$.

For concentration, define $Y_q:=\Psi(\|I(p)-I(q)\|_2^2)\in(0,1)$ for each neighbor $q\in\mathcal{N}_p$.
Then $L(p)=\frac{1}{|\mathcal{N}_p|}\sum_{q}Y_q$ is an average of bounded random variables.
Hoeffding's inequality gives \eqref{eq:hoeffding} (independence is not strictly required for a qualitative concentration statement; bounded-difference martingale variants yield similar bounds under mild dependence).

Thus LAD provides a robust, locally aggregated estimator that preserves the energy gap (interior) and emphasizes the boundary spike, enabling reliable discrimination of $\mathcal{M}$ vs.\ $\mathcal{U}$.
\end{proof}

\paragraph{Implication for method design.}
Theorems~\ref{thm:inevitable-artifacts}--\ref{thm:lad-separability} establish a principled route from diffusion editing mechanics (VAE compression + denoiser spectral bias + hard latent stitching) to a \emph{local adjacency} statistic that is separable for authentic vs.\ generated regions.
This directly motivates using the LAD map as an explicit artifact representation, upon which a downstream segmentation/refinement model can operate.

\section{Hyperparameter Values}
\begin{table}[ht]
    \centering
    \caption{Optimal hyperparameter values with sweep range.}
    \label{tab:parameter}
    \begin{tabular}{l|c|l}
        \toprule
        Hyperparameter & Optimal & Sweep Range \\
        \midrule
        Learning rate & 0.001 & $\{0.01, 0.001, 0.0001, 0.00001\}$ \\
        Temperature $\tau$ & 0.004 & $\{0.002, 0.004, 0.006, 0.008, 0.010\}$ \\
        Focal $\alpha$ & 0.6 & $\{0.5, 0.55, \dots, 0.75, 0.80\}$ \\
        Focal $\gamma$ & 1.0 & $\{1.0, 1.25, \dots, 1.75, 2.0\}$ \\
        Adam weight decay & 0.0001 & $\{0.0001, 0.00001, 0.0\}$ \\
        Dropout rate & 0.2 & $\{ 0.0, 0.1, 0.15, 0.2, 0.25, 0.3 \}$ \\
        Batch size & 8 & $\{ 2, 4, 8, 16 \}$ \\
        \bottomrule
    \end{tabular}
\end{table}

\newpage
\section{Prompts for EditStream}
\label{app:prompt}

\begin{promptbox}{Semantic Planner Instruction}

    You are a highly intelligent \textbf{Semantic Planner} for an image editing dataset generation pipeline. Your task is to analyze the provided image and generate a realistic, logic-consistent editing instruction.
    
    \textbf{Please follow these steps:}
    
    \begin{enumerate}
        \item \textbf{Scene Analysis:} Identify salient objects and the background environment.
        
        \item \textbf{Feasibility Check:} Determine which editing operations are visually and semantically feasible for this specific image.
        
        \item \textbf{Operation Selection:} Select \textbf{ONE} operation from the following categories based on feasibility (aim for diversity):
        \begin{itemize}
            \item \textbf{Object Addition:} Insert a new object that fits the context (e.g., "Add a red car in the driveway").
            \item \textbf{Object Removal:} Erase an existing entity (e.g., "Remove the pedestrian").
            \item \textbf{Object Replacement:} Swap an object with a semantically different one.
            \item \textbf{Background Alteration:} Modify environmental textures or styles (e.g., "Change the grass to snow").
        \end{itemize}
        
        \item \textbf{Output Generation:} Provide the edit instruction and the approximate bounding box of the target region.
    \end{enumerate}
    
    \hrule 
    
    \textbf{Output Format (JSON strictly):}
    
    \begin{jsonbox}
\{ \\
\indent "scene\_description": "Brief description of the scene...", \\
\indent "selected\_operation": "Object Replacement", // One of the 4 categories \\
\indent "edit\_instruction": "Replace the brown dog with a white cat", \\
\indent "target\_object\_name": "brown dog", \\
\indent "target\_region\_box": [x\_min, y\_min, x\_max, y\_max], // Normalized [0-1] \\
\indent "reasoning": "The dog is clearly visible and replacing it creates a meaningful semantic change." \\
\}
    \end{jsonbox}

\end{promptbox}

\begin{promptbox}{Autonomous Model Scouting Agent}

    You are an \textbf{Autonomous Model Scouting Agent} tasked with identifying the latest and most capable image inpainting models on Hugging Face.
    
    You have access to a tool named \texttt{search\_hf\_models}. Your mission is:
    \begin{enumerate}
        \item \textbf{Search:} Use the tool to find models with the tag \texttt{image-to-image}.
        \item \textbf{Filter \& Rank:} Analyze the search results based on the following multi-metric criteria:
        \begin{itemize}
            \item \textbf{Popularity:} High download counts and likes.
            \item \textbf{Recency:} Prioritize models updated within the last 3 months (trending status).
            \item \textbf{Architecture:} Look for diverse architectures.
        \end{itemize}
        \item \textbf{Selection:} Select the top-3 distinct candidates that are most promising for high-quality image editing.
    \end{enumerate}
    
    \hrule 
    
    \textbf{Response Requirements:}
    \begin{itemize}
        \item Call the search tool first.
        \item After receiving results, output a list of the 3 selected \texttt{model\_id}s.
        \item Provide a brief justification for each selection based on the metrics.
    \end{itemize}

\end{promptbox}

\begin{promptbox}{Senior AI Software Engineer}

    You are a Senior AI Software Engineer specializing in ModelOps. I will provide you with the raw text content of a Hugging Face Model Card (README) for a specific inpainting model.

    \textbf{Your Task:}
    
    Synthesize a standardized Python wrapper class named \texttt{EditStreamWrapper} for this model.
    
    \hrule
    
    \textbf{Requirements:}
    \begin{enumerate}
        \item \textbf{Interface Standardization:} The class must implement a \texttt{generate(self, image, mask, prompt)} method.
        \begin{itemize}[leftmargin=1.5em, label=\textbullet, itemsep=1pt]
            \item \texttt{image}: Input PIL Image (RGB).
            \item \texttt{mask}: Binary PIL Image (White=Edit, Black=Keep).
            \item \texttt{prompt}: String describing the edit.
        \end{itemize}
        
        \item \textbf{Constraint Parsing:} Analyze the README text to identify:
        \begin{itemize}[leftmargin=1.5em, label=\textbullet, itemsep=1pt]
            \item Resolution limits (e.g., if it requires 512x512, implement auto-resizing in the wrapper).
            \item Specific input formatting (e.g., strictly masked inputs vs. original image + mask).
        \end{itemize}
        
        \item \textbf{Dependency Handling:} Include a valid \texttt{\_\_init\_\_} method that loads the model using \texttt{diffusers} or the specific library mentioned in the README. Use CUDA if available.
        
        \item \textbf{Error Handling:} Add basic error handling for dimension mismatches.
    \end{enumerate}
    
    \textbf{Input Model Card Content:}
    \begin{jsonbox}
""" \\
\{\{INSERT\_MODEL\_CARD\_CONTENT\_HERE\}\} \\
"""
    \end{jsonbox}

    \textbf{Output:}
    
    Return ONLY the executable Python code block containing the \texttt{EditStreamWrapper} class.

\end{promptbox}

\end{document}